\documentclass[12pt]{article}
\usepackage[utf8]{inputenc} 
\usepackage[T1]{fontenc}    
\usepackage[colorlinks]{hyperref}
\usepackage{url}            
\usepackage{booktabs}       
\usepackage{amsfonts}       
\usepackage{nicefrac}       
\usepackage{microtype}      
\usepackage{xcolor}         
\usepackage{comment}
\usepackage{amsmath,amsthm,amssymb,multirow,csquotes,cleveref}
\usepackage{xhfill,tikz-network}
\usepackage{wrapfig}
\usepackage[export]{adjustbox}
\usepackage{graphicx,environ}
\usepackage{tikz}
\usepackage{tikzscale}
\usepackage{algorithm}
\usepackage{algorithmic}
\usepackage{pgfplots,enumitem,color,colortbl}
\usepackage{setspace}
\pgfplotsset{compat=newest}
\usetikzlibrary{automata,arrows,calc,positioning}

\newcommand{\norm}[1]{\left\| #1 \right\|}
\renewcommand{\Pr}{\mathrm{Pr}}

\renewcommand{\r}{\mathbf{r}}
\renewcommand{\S}{\mathcal{S}}

\newcommand{\N}{\mathcal{N}}
\renewcommand{\a}{\mathbf{a}}

\DeclareMathOperator{\ex}{\mathbb{E}}

\newtheorem{theorem}{Theorem}[section]

\newtheorem{lemma}[theorem]{Lemma}
\newtheorem{proposition}[theorem]{Proposition}

\theoremstyle{definition}
\newtheorem{definition}{Definition}
\newtheorem{example}{Example}
\theoremstyle{remark}
\newtheorem{remark}{Remark}

\title{Independent Natural Policy Gradient Always Converges in Markov Potential Games}

\usepackage{authblk}

\author{Roy Fox, Stephen McAleer, Will Overman, Ioannis Panageas}

\affil{University of California, Irvine}
\begin{document}
\date{}
\maketitle

\begin{abstract}
  Multi-agent reinforcement learning has been successfully applied to fully-cooperative and fully-competitive environments, but little is currently known about mixed cooperative/competitive environments. 
  In this paper, we focus on a particular class of multi-agent mixed cooperative/competitive stochastic games called Markov Potential Games (MPGs), which include cooperative games as a special case. 
  Recent results have shown that independent policy gradient converges in MPGs but it was not known whether Independent Natural Policy Gradient converges in MPGs as well.  
  We prove that Independent Natural Policy Gradient always converges in the last iterate using constant learning rates. The proof deviates from the existing approaches and the main challenge lies in the fact that Markov Potential Games do not have unique optimal values (as single-agent settings exhibit) so different initializations can lead to different limit point values. We complement our theoretical results with experiments that indicate that Natural Policy Gradient outperforms Policy Gradient in routing games and congestion games. 
\end{abstract}

\section{Introduction}
Reinforcement learning has proven successful on a number of sequential decision making tasks \cite{Mni15, akkaya2019solving, arulkumaran2017deep}. However, much of this recent progress has been in single-agent domains. In multi-agent reinforcement learning domains, many of the assumptions that are useful for understanding single agent reinforcement learning do not hold. For example, from the perspective of a single agent, the MDP is not stationary over time as other agents change their policies. The complications arising in the multi-agent reinforcement learning setting make it much less understood than the single-agent setting. 

Despite the difficulty of multi-agent reinforcement learning (RL), recent RL algorithms have been developed for the fully-competitive two-player setting~\cite{lanctot2017unified, heinrich2016deep, brown2019deep, mcaleer2020pipeline, mcaleer2021xdo, hennes2019neural, daskalakis2021independent}, achieving superhuman performance on StarCraft \cite{vinyals2019grandmaster}, Go and Chess \cite{silver2017mastering}, and Poker \cite{brown2018superhuman, moravvcik2017deepstack}. 
Recent multi-agent RL algorithms have also achieved success on the multi-agent fully cooperative reward setting \cite{foerster2017learning, lowe2017multi, sunehag2017value, foerster2018counterfactual, oroojlooyjadid2019review, kuba2021trust}. Very recently, Markov Potential Games (MPGs) have emerged as a generalization of fully cooperative settings \cite{leonardos21, zhang2021gradient, mguni2021learning}. A Markov game is a MPG if there exists a potential function such that if one agent changes their policy, the difference in their value function is the same as the difference in the potential function in all states. MPGs generalize normal-form potential games to multi-step Markov games and include many important examples such as routing games (described below) and cooperative games. 

In multi-agent games, the most common solution concept is a Nash equilibrium policy \cite{nash1950equilibrium}. All agents are said to be in a Nash policy if no agent can improve their reward by deviating to another policy provided that the other agents do not deviate. Since Nash equilibria exist in every finite game, Nash equilibria serve as a desriptive solution concept, because if rational agents are not in a Nash equilibrium, they will change their policy. In two-player zero sum games and cooperative games, Nash equilibria also serve as a prescriptive solution concept, and are in a sense optimal. 

Calculating Nash policies in MPGs is of interest for a variety of reasons. First, Nash policy outcomes can be compared with max-welfare outcomes to determine the price of anarchy, a measure of the inefficiency of individual selfish behavior \cite{roughgarden2009intrinsic}. Second, MPGs have the property that pure Nash policies always exist, and can be found by \emph{locally} maximizing the potential function. Therefore, MPGs allow easier analysis of equilibrium behaviour than is known for generic general-sum Markov Games. 

A natural question to ask in this setting is whether independent learners can converge to a Nash policy in the last iterate. 
The only existing algorithm guaranteed to converge in MPGs in the last iterate is independent policy gradient \cite{leonardos21, zhang2021gradient}. Although independent policy gradient will converge to a Nash policy, it can converge very slowly. This is a common problem in single-agent policy gradient (PG) as well, where policies can often be on the boundary of the action-distribution simplex, giving a small gradient signal. 

Conversely, Natural Policy Gradient (NPG) has been shown to converge much faster than PG in the single-agent setting because NPG is able to modify the gradient to induce a large update even when the gradient is very small \cite{Aga20}. The main result of this paper is to show that when $n$ agents independently run Natural Policy Gradient in a Markov Potential Game they will convergence asymptotically to a Nash equilibrium. We also show experimentally that Independent Natural Policy Gradient indeed converges faster than Independent Policy Gradient in two separate settings.
\paragraph{Our results.}

Our main technical result is summarized by the following informal statement of Theorem~\ref{thm:mainconvergence}:
\begin{theorem}[Informal]
Consider a Markov Potential Game in which all agents are updating their policies according to Independent Natural Policy Gradient. For small enough stepsize $\eta$, Independent Natural Policy Gradient exhibits last-iterate (asymptotic) convergence to Nash-equilibrium policies.
\end{theorem}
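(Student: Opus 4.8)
The plan is to use the game's potential function $\Phi$ as a Lyapunov function and to read independent NPG as a simultaneous entropic mirror ascent on $\Phi$. First I would put the tabular softmax NPG update in closed form: a direct computation with the Fisher information matrix shows that each agent's step is the multiplicative-weights / exponential-weights rule
\[
\pi_i^{(t+1)}(a\mid s)\ \propto\ \pi_i^{(t)}(a\mid s)\,\exp\!\big(\eta\, \bar A_i^{(t)}(s,a)\big),
\]
where $\bar A_i^{(t)}$ is agent $i$'s advantage under the current joint policy. This reduces the dynamics to an advantage-driven exponential-weights process and lets me import the single-agent NPG machinery of Agarwal et al.\ for the per-agent improvement.

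The heart of the proof is a \emph{potential-improvement lemma} asserting that $\Phi(\pi^{(t+1)})\ge\Phi(\pi^{(t)})$ for $\eta$ small enough. Because all agents move at once, I cannot directly equate the change in $\Phi$ with a single agent's value change, so I would telescope over the agents, writing the simultaneous update as a sequence of one-at-a-time deviations. By the defining MPG identity each such single-agent term equals the corresponding change in that agent's value function, to which the single-agent NPG improvement bound applies and contributes a nonnegative amount proportional to $\log Z_i^{(t)}(s)\ge 0$; the error incurred by changing the ``background'' policy across the telescoping terms is controlled by the smoothness (Lipschitz gradient) of $\Phi$ in the softmax parameters and is $O(\eta^2)$. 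Choosing $\eta$ below an explicit threshold makes the first-order gain dominate, yielding monotone ascent.

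Monotonicity then does most of the work. Since $\Phi$ is continuous on the compact product of simplices it is bounded, so $\Phi(\pi^{(t)})$ converges and the per-step increments are summable; in particular they tend to $0$. Through the improvement lemma this forces $\log Z_i^{(t)}(s)\to 0$ for every agent $i$ and state $s$, which is exactly the statement that all advantages vanish in the limit---the first-order (KKT) stationarity condition characterizing Nash policies in an MPG. Hence every accumulation point of $\{\pi^{(t)}\}$ is a Nash equilibrium.

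The remaining, and genuinely hard, step is to upgrade ``every accumulation point is Nash'' to last-iterate convergence of the sequence itself, and this is precisely where the absence of a unique optimal value matters: the accumulation set could a priori be a positive-dimensional continuum of Nash policies, and nothing so far rules out the iterates drifting along it. To close this gap I would establish asymptotic regularity, $\mathrm{KL}\!\big(\pi^{(t+1)}\,\|\,\pi^{(t)}\big)\to 0$ (again extracted from the vanishing improvement), show that the resulting accumulation set is connected and that $\Phi$ is constant on it, and then invoke the standard fact that an asymptotically regular sequence whose accumulation set is a connected set of fixed points, together with a Lyapunov function strictly monotone off that set, cannot oscillate and must converge to a single point. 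I expect the delicate part to be the local analysis near a limiting Nash policy---controlling the actual displacement of the policies in the KL geometry rather than merely the value of $\Phi$---since this is the one place the argument genuinely departs from the single-agent template.
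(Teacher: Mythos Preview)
Your monotonicity argument and the paper's differ in an interesting way. You telescope the simultaneous update over agents and control the cross-agent error by smoothness; the paper instead observes that, because the joint Fisher matrix is block-diagonal across agents and $\nabla_{\theta_i}\Phi=\nabla_{\theta_i}V_i$, INPG is \emph{literally} NPG on the single function $\Phi$, and then proves a Mahalanobis-norm smoothness bound for $\Phi$ (with conditioning matrix $D^\theta$ having entries $d_\mu^{\pi_\theta}(s)\pi_{\theta_i}(a\mid s)$) to run a standard descent-lemma argument. Both routes can be made to work; the paper's avoids the agent-by-agent telescoping entirely, while yours keeps the per-agent $\log Z_i$ improvement term explicit.

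There is, however, a genuine gap in your passage from ``$\log Z_i^{(t)}(s)\to 0$'' to ``every accumulation point is Nash.'' First, the KKT/Nash condition is not that all advantages vanish; it is that $\bar A_i(s,a)\le 0$ for every $a$, with equality on the support of $\pi_i(\cdot\mid s)$. Second, and more importantly, $\log Z_i^{(t)}(s)\to 0$ only forces the $\pi_i^{(t)}$-variance of the advantages to vanish, i.e.\ $\sum_a \pi_i^{(t)}(a\mid s)\,\bar A_i^{(t)}(s,a)^2\to 0$; an action whose probability tends to zero can still carry a strictly positive limiting advantage without obstructing $\log Z^{(t)}\to 0$. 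So your argument does not rule out a limit point with $\pi_i^\infty(a\mid s)=0$ and $\bar A_i^{\pi^\infty}(s,a)>0$, which would fail to be Nash. The paper closes exactly this gap by a separate MWU-structure argument: if such an action existed, then in a neighborhood of $\pi^\infty$ one has $e^{\eta \bar A_i/(1-\gamma)}>Z_i$, so the probability $\pi_i^{(t)}(a\mid s)$ would be strictly increasing for all large $t$, contradicting $\pi_i^{(t)}(a\mid s)\to 0$.

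On the last-iterate step, note that the paper does not prove the general connected-$\omega$-limit statement you sketch. It first shows the $\omega$-limit set consists of fixed points (via the Lyapunov monotonicity), and then \emph{assumes the fixed points are isolated} to conclude convergence of $\pi^{(t)}$ to a single point. Your proposed upgrade via asymptotic regularity plus ``a standard fact'' is more ambitious, but that fact is not standard without additional structure (e.g.\ a \L ojasiewicz-type inequality); connectedness of the accumulation set together with a Lyapunov function constant on it does not by itself preclude drift along a continuum. If you want to avoid the isolated-fixed-points assumption you will need a genuinely new ingredient here.
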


\begin{remark}[Stepsize needs to be small] The stepsize cannot be arbitrarily large, as it has been shown that the Multiplicative Weights Update Algorithm in potential games (non-sequential MPGs) exhibits chaotic behavior for large stepsizes \cite{PPP17}. Please see Remark \ref{rem:fixedpoints} that argues that Multiplicative Weights Update is effectively Natural Policy Gradient on the policy space.
\end{remark}

\paragraph{Technical Overview.} Our main technical result is to show that Indepedent Natural Policy Gradient (INPG) globally converges to equilibrium policies for a fixed stepsize in Markov Potential Games. The proof can be brokwn down into three main steps. The first step is to show that INPG is equivalent to Natural Gradient Ascent on the potential function $\Phi$ induced by the Markov Potential Game. Having shown this fact, in the second step we show that $\Phi$ is non-decreasing in each INPG iteration. This step is rather technical as it involves showing that the gradient of $\Phi$ is locally Lipschitz in the softmax parametrization, with respect to a
Mahalanobis norm\footnote{For a positive definite matrix $A$, the Mahalanobis norm is $\norm{x}_A = \sqrt{x^{\top}Ax}.$} with a positive diagonal conditioning matrix induced by the Fisher information matrix. The last step is to show that the limit points of Independent Natural Policy Gradient are indeed equilibrium policies. Note that in the softmax parametrization this does not follow immediately from the monotonic non-decrease of the potential. We prove this step by exploiting the structure of the Multiplicative Weights Update Algorithm, which is essentially the same as Natural Policy Gradient on the policy space (see \cite{Aga20} and Remark \ref{rem:fixedpoints}).

\section{Preliminaries}\label{prelims}

\paragraph{Markov Decision Process (MDP).} The notation we use is standard and largely follows \cite{Aga20} and \cite{leonardos21}. We consider a setting with $n$ agents who select actions in a  Markov Decision Process (MDP). A MDP is a tuple $(\mathcal S, N, \{\mathcal A_i,r_i\}_{i \in N}, P, \gamma, \mu)$ in which: 
\begin{itemize}[leftmargin=*, itemsep=0cm]
    \item $\mathcal S$ is a finite state space. We write $\Delta(\mathcal S)$ to denote the set of all probability distributions over the set $\mathcal S$ and $|\mathcal S|$ for its cardinality.
    \item $N=\{1,2,\dots,n\}$ is the set of agents in the game.
    \item $\mathcal A_i$ denotes a finite set of actions for agent $i$. Using common conventions in Game Theory, we write $\mathcal A=\times_{i\in N}\mathcal A_i$ and $\mathcal A_{-i}=\times_{j\neq i}\mathcal A_j$ to denote the joint action space of all agents and of all agents but $i$, respectively. We also have that $\a = (a_i,\mathbf{a_{-i}}) \in \mathcal A$ in which $a_i \in \mathcal A_i$ and $\a_{-i}\in \mathcal A_{-i}$.
    \item $r_i: \mathcal S \times \mathcal A \to [0,1]$ is the individual reward function of agent $i\in N$, i.e., $r_i(s,a_i,\a_{-i})$ is the instantaneous reward of agent $i$ when agent $i$ takes action $a_i$ and the other agents choose $\a_{-i}$ at state $s\in \mathcal S$.
    \item $P$ is the transition probability matrix, for which $P(s'\mid s,\mathbf{a})$ is the probability of transitioning from $s$ to $s'$ given that agents choose joint action $\mathbf{a} \in \mathcal A$.
    \item $\gamma \in [0,1)$ is a discount factor, same for all agents. For finite-horizon MDPs we take $\gamma = 1$.
    \item $\mu\in \Delta(\mathcal S)$ is the distribution of the initial state.
    \end{itemize}

\paragraph{Policies and Value Functions.}  For each agent \mbox{$i\in N$}, a deterministic, stationary policy $\pi_i: \mathcal S \to \mathcal A_i$ specifies the action of agent $i$ at each state $s\in \mathcal S$. A stochastic policy $\pi_i: \mathcal S \to \Delta(\mathcal A_i)$ denotes a probability distribution over the actions of agent $i$ for each state $s\in \mathcal S$. We denote by $\Delta(\mathcal A_i)^{|\mathcal S|}$, $\Delta(\mathcal A)^{|\mathcal S|}$, and $\Delta(\mathcal A_{-i})^{|\mathcal S|}$ the set of all stochastic policies for agent $i$, the set of all joint (product distribution) stochastic policies for all agents, and for all agents but $i$, respectively. A joint policy $\pi$ induces a distribution over trajectories $\tau=(s_t,\a_t,\r_t)_{t\ge0}$, where $s_0$ is drawn from the initial state distribution $\mu$,  $a_{i,t}$ is drawn from $\pi_i(\cdot\mid s_t)$ for all agents $i\in N$ and $\r_t$ depends on $s_t,\a_t$.\par

In the multi-agent setting, each agent is aiming to maximize her respective value function. The value function $V_i^{\pi}(s): \Delta(\mathcal A)^{|\mathcal S|}\to\mathbb R$ gives the expected reward of agent $i\in N$ when the MDP starts from $s_0=s$ and the agents use joint policy $\pi$
\begin{equation}\label{eq:value_function}
V^{\pi}_i(s) := \ex \left[\left.\sum_{t=0}^\infty \gamma^t r_{i}(s_t,\a_t)\right| s_0=s\right].  
\end{equation}
We also denote $V_i^{\pi}(\mu) = \ex_{s \sim \mu}\left[V^{\pi}_i(s)\right]$ if the initial state follows distribution is $\mu.$ Analogously, one can define for each agent $i$ the $Q$-function \mbox{$Q_i^{\pi} (s,\mathbf{a}) = r_{i}(s,\mathbf{a})+\gamma \mathbb{E}_{s'\sim P(.|s,\a)}[V_i^{\pi}(s')]$} and advantage function $A_i^{\pi}(s,\mathbf{a}) = Q_i(s,\mathbf{a})-V_i^{\pi}(s)$. 
The solution concept that we will be focusing on in this paper is the Nash equilibrium joint policy. 

\begin{definition}[Equilibrium Joint Policy]\label{def:nashpolicy}
A joint policy, $\pi^*=(\pi_i^*)_{i\in N}\in \Delta(\mathcal A)^{|\mathcal S|}$, is an equilibrium policy if for each agent $i$,
\[V_i^{(\pi_i^*,\pi_{-i}^*)}(s)\ge V_i^{(\pi_i,\pi_{-i}^*)}(s),\] for all  $\pi_i\in \Delta(\mathcal A_i)^{|\mathcal S|}$, and all $s\in \mathcal S$, 
i.e., if agent $i$'s policy $\pi_i^*$ maximizes her value function for each starting state $s\in \mathcal S$ given the joint policy of the other agents $\pi^*_{-i},$ and this is true for all agents $i$. Similarly, a joint policy $\pi^*=(\pi_i^*)_{i\in N}$ is an $\epsilon$-equilibrium policy if there exists an $\epsilon>0$ so that for each agent $i$
\[V_i^{(\pi_i^*,\pi_{-i}^*)}(s)\ge V_i^{(\pi_i,\pi_{-i}^*)}(s)-\epsilon,\] for all $\pi_i\in \Delta(\mathcal A_i)^{|\mathcal S|}$ and all  $s\in \mathcal S$.
\end{definition}

\paragraph{Markov Potential Games.}
For the rest of the paper we will focus on a particular class of multi-agent MDPs called Markov Potential Games.

\begin{definition}[Markov Potential Game \cite{mgu20,mguni2021learning,leonardos21, zhang2021gradient}]\label{def:potential}
A Markov Decision Process is called a \emph{Markov Potential Game (MPG)} if there exists a (state-dependent) function $\Phi^{\pi}(s):\Delta(\mathcal A)^{|\mathcal S|} \to \mathbb{R}$ so that 
\begin{equation}\label{eq:potential}
    \Phi^{(\pi_i, \pi_{-i})}(s) - \Phi^{(\pi_i',\pi_{-i})}(s) = V_i^{(\pi_i, \pi_{-i})}(s) - V_i^{(\pi_i', \pi_{-i})}(s),
\end{equation}
for all agents $i\in N$, all states $s\in \mathcal S$ and all policies $\pi_i,\pi_i'\in \Delta(\mathcal A_i)^{|\mathcal S|}$ and $\pi_{-i}\in \Delta(A_{-i})^{|S|}$. By linearity of expectation, the same is true if $s\sim \mu$ 
\[\Phi^{(\pi_i, \pi_{-i})}(\mu) - \Phi^{(\pi_i',\pi_{-i})}(\mu) = V_i^{(\pi_i, \pi_{-i})}(\mu) - V_i^{(\pi_i', \pi_{-i})}(\mu),\] where $\Phi^{\pi}(\mu) := \ex_{s \sim \mu}\left[\Phi^{\pi}(s)\right].$
\end{definition}

We conclude this subsection with the definition of the mismatch coefficient.
\paragraph{Discounted State Distribution.} It is useful to define the discounted state visitation distribution $d_{s_0}^\pi(s)$ for $s\in \mathcal S$ that is induced by a policy $\pi$ 
\begin{equation}\label{eq:visitation}
    d_{s_0}^\pi(s):=(1-\gamma)\sum_{t=0}^{\infty}\gamma^t \Pr^\pi(s_t=s\mid s_0), \quad \text{ for all } s\in \S.
\end{equation}
We write $d_\mu^\pi(s)=\ex_{s_0\sim\mu}[d_{s_0}^\pi(s)]$ to denote the discounted state visitation distribution when the initial state distribution is $\mu$.

\begin{definition}[Distribution Mismatch coefficient \cite{Aga20}]
Let $\mu$ be any distribution in $\Delta(\S)$ and let $\mathcal{O}$ be the set of policies $\pi \in \Delta(\mathcal{A})^\S$. We call 
\[M :=\max_{\pi,\tilde{\pi} \in \mathcal{O}}\norm{\frac{d^{\pi}_{\mu}}{d^{\tilde{\pi}}_{\mu}}}_{\infty}\] 
the \emph{distribution mismatch coefficient}, where $d^{\pi}_{\mu}$ is the discounted state distribution (\ref{eq:visitation}). 
\end{definition}

\begin{figure*}
  \includegraphics[width=\textwidth]{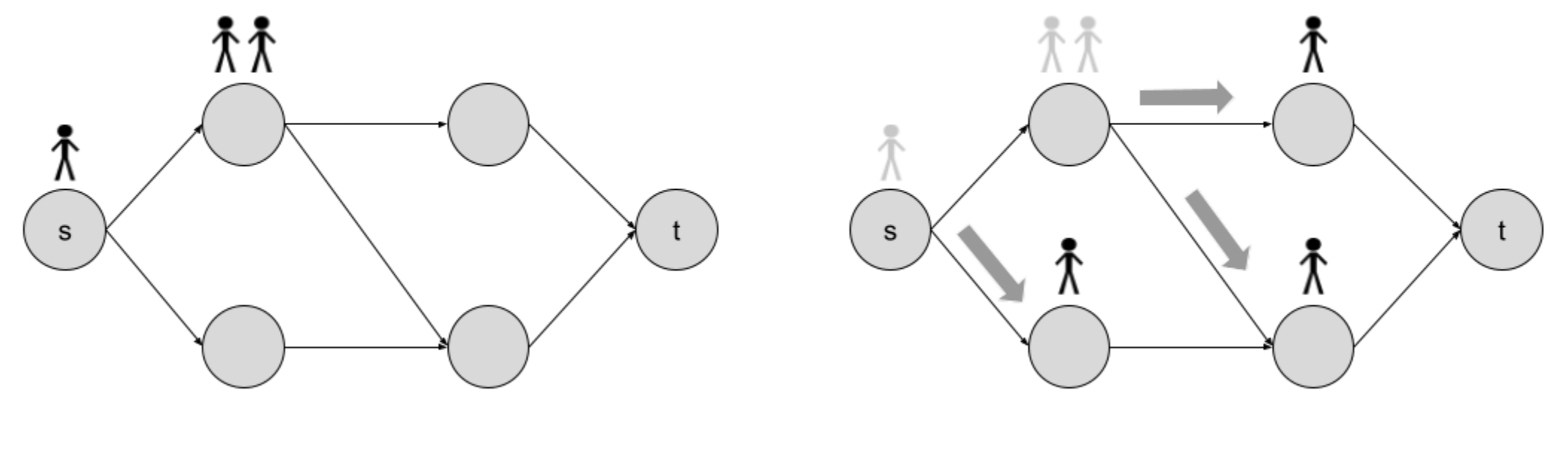}
  \caption{Two different configurations of the stochastic congestion game on 6 vertices. These correspond to two different states in the associated MDP. Arrows in the right figure show the actions chosen by the 3 agents to move from the MDP state on the left to MDP state on the right.}
  \label{fig:scg}
\end{figure*}

\begin{example}[\textbf{Stochastic Congestion Games}]\label{ex:scg}
 We are given a directed acyclic graph $G$, with a source $s$ and a destination node $t$ and $N$ agents (\Cref{fig:scg}). Every state of the finite-horizon MDP consists of the positions of all agents on the graph. The MDP terminates when they all reach $t$. Each agent chooses an action (an edge $e$ which is the node to go next) and the cost he experiences is a function of the load $l_e$ of the edge $e$ that he chooses $c_e(l_e)$, where the load is the total number of agents choosing the same edge. This setting is a MPG with the following potential function. Suppose we are at state $s$ and let $\pi := (\pi_1,...,\pi_N)$, then we have

\[\Phi^{\pi}(s) = \sum_{e \text{ reachable from }s}\mathbb{E}_{\pi}\left[ \sum_{k=1}^{l_e} c_e(k)\right].\]

Consider any player $i$. With $l_e^{-i}$ the number of agents other than $i$ choosing edge $e$, we have that 
\begin{align*}
\Phi^{(\pi_i,\pi_{-i})}(s) = &\sum_{e \text{ reachable from }s}\pi_{i}(e|s)\mathbb{E}_{\pi_{-i}}\left[ \sum_{k=1}^{l_e^{-i}+1} c_e(k)\right]+(1-\pi_{i}(e|s))\mathbb{E}_{\pi_{-i}}\left[ \sum_{k=1}^{l_e^{-i}} c_e(k)\right] \\
= &\sum_{e \text{ reachable from }s}\pi_{i}(e|s)\mathbb{E}_{\pi_{-i}}\left[  c_e(l_e+1)\right] +\sum_{e \text{ reachable from }s}\mathbb{E}_{\pi_{-i}}\left[ \sum_{k=1}^{l_e} c_e(k)\right]. 
\end{align*}

So if player $i$ deviates from $\pi_i$ then the change in the potential will equal the change in their individual utility since the common term in the second sum cancels as it does not depend on $\pi_i$.
We refer to this illustrative class of MPGs zas stochastic congestion games. 
\end{example}

\subsection{Natural Policy Gradient}
\paragraph{Softmax parametrization.} We assume that each agent $i$ is represented by its logits — a vector $\theta_{i}$ of size $\mathcal S \times \mathcal A_i$ such that the policy is given by 
\begin{equation}
    \pi_{\theta_i}(a|s) = \frac{\exp (\theta_{i,s,a})}{\sum_{a' \in \mathcal A_i}\exp (\theta_{i,s,a'})} \textrm{ for all }a \in \mathcal A_{i}, s\in \mathcal S. 
\end{equation}

\paragraph{Independent Natural Policy Gradient (INPG).} Given the softmax parametrization, we can define Independent Natural Policy Gradient:
\begin{equation}\label{eq:npggeneral}
    \theta^{(t+1)}_{i} = \theta^{(t)}_i+\eta F_{i,\mu}^{+}(\theta^{(t)}) \nabla_{\theta_i} V_i^{\pi_{\theta^{(t)}}} \textrm{ for each agent }i,
\end{equation}
where $F_{i,\mu}^{+}(\theta)$ is the pseudo-inverse of the Fischer matrix 
\begin{align*}
F_{i,\mu}(\theta) = \mathbb{E}_{s\sim d^{\pi_{\theta}}_{\mu}} \mathbb{E}_{a_i \sim \pi_{\theta_i}(.|s)} \left[\nabla_{\theta_i} \log \pi_{\theta_i}(a|s) \nabla_{\theta_i} \log \pi_{\theta_i}(a|s) ^{\top}\right]
\end{align*}
and $\eta$ is the step-size. 

NPG equations can be simplified using the advantage function:
\begin{lemma}[Simplified equations of INPG \cite{Aga20}]\label{lem:npgequations}
For each agent $i$, state $s$, and action $a$, we have that
\begin{equation}\label{eq:npg}\tag{INPG}
    \theta^{(t+1)}_{i,s,a} = \theta_{i,s,a}^{(t)}+\frac{\eta}{1-\gamma}A_i^{(t)}(s,a),
\end{equation}
where $A_{i}^{\theta}(s,a) = \mathbb{E}_{\a_{-i}\sim \pi_{\theta_{-i}}}[A_i^{\pi_{\theta}}(s,a,\a_{-i})]$ is the expectation of the advantage function of $i$ over the actions of all agents but $i$ who use joint policy $\pi_{\theta_{-i}}$, and $A_i^{(t)}$ is shorthand for $A_i^{\theta^{(t)}}$. 
\end{lemma}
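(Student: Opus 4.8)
The plan is to show that the abstract natural-gradient direction $F_{i,\mu}^{+}(\theta)\nabla_{\theta_i} V_i^{\pi_\theta}$ appearing in \eqref{eq:npggeneral} equals, coordinatewise, $\tfrac{1}{1-\gamma}A_i^{\theta}(s,a)$, so that \eqref{eq:npggeneral} collapses to \eqref{eq:npg}. The organizing observation is that once the other agents' policies $\pi_{\theta_{-i}}$ are held fixed, agent $i$ faces an \emph{effective single-agent MDP} whose advantage function is exactly $A_i^{\theta}(s,a)=\ex_{\a_{-i}\sim\pi_{\theta_{-i}}}[A_i^{\pi_\theta}(s,a,\a_{-i})]$. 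Consequently the single-agent computation of Agarwal et al.\ \cite{Aga20} applies verbatim, with this marginalized advantage playing the role of the usual advantage, and the only genuinely multi-agent content is this reduction.

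First I would invoke the compatible-function-approximation characterization of the natural gradient. By the policy gradient theorem, $\nabla_{\theta_i} V_i^{\pi_\theta}(\mu)=\tfrac{1}{1-\gamma}\,\ex_{s\sim d^{\pi_\theta}_\mu}\,\ex_{a\sim\pi_{\theta_i}(\cdot\mid s)}\big[\nabla_{\theta_i}\log\pi_{\theta_i}(a\mid s)\,A_i^\theta(s,a)\big]$. Comparing this with the definition of $F_{i,\mu}(\theta)$ and writing out the normal equations shows that $w^\star:=(1-\gamma)\,F_{i,\mu}^{+}(\theta)\,\nabla_{\theta_i}V_i^{\pi_\theta}$ is precisely the minimizer of the weighted least-squares problem
\[
w^\star\in\argmin_{w}\;\ex_{s\sim d^{\pi_\theta}_\mu}\,\ex_{a\sim\pi_{\theta_i}(\cdot\mid s)}\Big[\big(w^\top\nabla_{\theta_i}\log\pi_{\theta_i}(a\mid s)-A_i^\theta(s,a)\big)^2\Big],
\]
so that $F_{i,\mu}^{+}(\theta)\nabla_{\theta_i}V_i^{\pi_\theta}=\tfrac{1}{1-\gamma}w^\star$ and it suffices to identify this minimizer explicitly.

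Next I would use the closed form of the softmax score. A direct differentiation gives $\tfrac{\partial}{\partial\theta_{i,s,a}}\log\pi_{\theta_i}(a'\mid s')=\mathbf{1}\{s=s'\}\big(\mathbf{1}\{a=a'\}-\pi_{\theta_i}(a\mid s)\big)$. Substituting the candidate $w_{s,a}=A_i^\theta(s,a)$ into the regression model and summing over the coordinates $(s,a)$ yields $w^\top\nabla_{\theta_i}\log\pi_{\theta_i}(a'\mid s')=A_i^\theta(s',a')-\sum_{a}\pi_{\theta_i}(a\mid s')A_i^\theta(s',a')=A_i^\theta(s',a')$, where the correction term vanishes because $\sum_{a}\pi_{\theta_i}(a\mid s')A_i^\theta(s',a')=0$ by definition of the advantage. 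Hence the candidate $w=A_i^\theta$ achieves zero least-squares loss and therefore lies in the set of minimizers above.

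The step I expect to be the main obstacle is reconciling this candidate with the \emph{pseudo-inverse}, since $F_{i,\mu}(\theta)$ is rank-deficient under the softmax parametrization. Its kernel consists exactly of the state-dependent shifts $w_{s,a}=c_s$ (constant in $a$), which the score annihilates, so the minimizer of the regression is unique only up to such shifts; the pseudo-inverse selects the minimum Euclidean-norm representative, which differs from $A_i^\theta$ by precisely an element of this kernel. The resolution is that adding a quantity depending only on $s$ to the logits $\theta_{i,s,\cdot}$ leaves the softmax policy $\pi_{\theta_i}$ invariant, so both representatives induce the identical updated policy. We may therefore take the representative update $\theta_{i,s,a}^{(t+1)}=\theta_{i,s,a}^{(t)}+\tfrac{\eta}{1-\gamma}A_i^{(t)}(s,a)$ without loss of generality, which is exactly \eqref{eq:npg}.
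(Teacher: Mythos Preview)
Your argument is correct and is precisely the derivation behind Lemma~5.1 of \cite{Aga20}, which is all the paper invokes here: the lemma is stated with the citation and no separate proof is given, and later (in the proof of Lemma~\ref{lem:equivalence}) the paper simply quotes $F^{+}_{i,\mu}(\theta)\nabla_{\theta_i} V_{i}^{\pi_{\theta}}  = \tfrac{1}{1-\gamma}\,\mathbb{E}_{\a_{-i}\sim \pi_{\theta_{-i}}}A_{i}^{\pi_{\theta}}(s,a,\a_{-i})$ from that reference. Your single-agent reduction, compatible-function-approximation characterization of the natural gradient, verification that $w=A_i^\theta$ attains zero regression loss, and handling of the pseudo-inverse ambiguity via the softmax shift-invariance are exactly the steps in that proof, so there is nothing to compare. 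One small slip to fix: in the display where you evaluate $w^\top\nabla_{\theta_i}\log\pi_{\theta_i}(a'\mid s')$, the subtracted term should read $\sum_{a}\pi_{\theta_i}(a\mid s')A_i^\theta(s',a)$ (summing the advantage over $a$, not $a'$); the vanishing of this sum then follows from $\ex_{a\sim\pi_{\theta_i}(\cdot\mid s')}A_i^\theta(s',a)=0$, as you intend.
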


\begin{remark}\label{rem:fixedpoints} If we make a change of variables from $\theta$ to probabilities $\pi$, the equations of \ref{eq:npg} become the standard Multiplicative Weights Update algorithm (MWU), that is
\begin{equation}\label{eq:mwua}\tag{MWU}
    \pi_i^{(t+1)}(a|s) =  \pi_i^{(t)}(a|s)\frac{\exp (\frac{\eta}{1-\gamma} A^{\pi}_{i}(s,a))}{Z^{(t)}_{i,s}},
\end{equation}
where $Z_{i,s}^{(t)}$ is the renormalization term. Observe that the fixed points $\pi$ of (\ref{eq:mwua}) satisfy either $\pi_{i} (a|s)=0$ or $A^{\pi}_i(s,a)=0$ for all $a\in \mathcal A_i, s\in \mathcal S$ and agents $i$. 
\end{remark}
\section{Convergence results}
In this section we prove that INPG converges. The main theorem is stated formally as follows:
\begin{theorem}\label{thm:mainconvergence} Given a Markov Potential Game, assume that all the agents use Independent Natural Policy Gradient (\ref{eq:npg}) with stepsize $\eta < \frac{(1-\gamma)^3}{27n^2 A^2_{\max}M}$. It holds that the dynamics (\ref{eq:npg}) converge pointwise (last iterate convergence) to equilibrium policies.
\end{theorem}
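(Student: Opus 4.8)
The plan is to use the potential function $\Phi$ as a Lyapunov function for the coupled dynamics and to proceed in three steps: (i) identify INPG with natural gradient ascent on $\Phi$, (ii) prove $\Phi$ is monotonically non-decreasing along the trajectory for small $\eta$, and (iii) show every limit point satisfies the Nash/stationarity conditions and that the whole trajectory converges.

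\textbf{Step 1 (INPG is natural gradient ascent on $\Phi$).} First I would show that the coupled INPG updates of all agents are exactly block-natural-gradient ascent on the single function $\Phi^{\pi}(\mu)$. The key observation is that the defining identity \eqref{eq:potential} says $V_i$ and $\Phi$ differ by a quantity independent of $\pi_i$, so for each agent $\nabla_{\theta_i}\Phi^{\pi_\theta}(\mu)=\nabla_{\theta_i}V_i^{\pi_\theta}(\mu)$ and, more importantly, the expected advantage of agent $i$ computed with respect to $V_i$ equals the advantage computed with respect to $\Phi$. Combining this with the policy-gradient theorem in the softmax parametrization (giving $\partial V_i/\partial\theta_{i,s,a}=\frac{1}{1-\gamma}d_\mu^{\pi}(s)\pi_i(a\mid s)A_i^{\pi}(s,a)$) and the explicit form of the Fisher pseudo-inverse, the preconditioning cancels the $d_\mu^{\pi}(s)\pi_i(a\mid s)$ weighting and recovers exactly the advantage-based update of Lemma~\ref{lem:npgequations}. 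Hence the whole system is natural gradient ascent on $\Phi$ and, equivalently (Remark~\ref{rem:fixedpoints}), simultaneous MWU on the policy simplices.

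\textbf{Step 2 (monotone ascent).} Next I would prove the ascent lemma $\Phi^{\pi^{(t+1)}}(\mu)\ge\Phi^{\pi^{(t)}}(\mu)$ via a second-order estimate $\Phi(\theta+\Delta\theta)\ge\Phi(\theta)+\langle\nabla_\theta\Phi,\Delta\theta\rangle-\frac{L}{2}\norm{\Delta\theta}_F^2$, where $\norm{\cdot}_F$ is the Mahalanobis norm whose diagonal conditioning matrix has entries $d_\mu^{\pi}(s)\pi_i(a\mid s)$ (the diagonal of the Fisher information). With $\Delta\theta_{i,s,a}=\frac{\eta}{1-\gamma}A_i(s,a)$, the first-order term is exactly $\frac{1}{\eta}\norm{\Delta\theta}_F^2=\frac{\eta}{(1-\gamma)^2}\sum_i\sum_{s,a}d_\mu^{\pi}(s)\pi_i(a\mid s)A_i(s,a)^2\ge0$, so the step aligns with the gradient in this metric. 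The crux is a local Lipschitz bound on $\nabla_\theta\Phi$ in the $\norm{\cdot}_F$ norm; bounding the per-agent and cross-agent second-derivative terms of $\Phi$ (the latter producing the $n^2$ factor), controlling the change of the visitation measure $d_\mu^{\pi}$ between consecutive iterates through the mismatch coefficient $M$, and bounding advantages by $A_{\max}$ yields a smoothness constant whose interaction with the $\frac{1}{\eta}$ first-order term forces monotone increase precisely when $\eta<\frac{(1-\gamma)^3}{27n^2A_{\max}^2M}$.

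\textbf{Step 3 (limit points are Nash and last-iterate convergence; the main obstacle).} Since rewards lie in $[0,1]$, $\Phi^{\pi}(\mu)$ is bounded, so the monotone sequence $\Phi^{\pi^{(t)}}(\mu)$ converges and the per-step increments vanish; by Step 2 this forces $\sum_{s,a}d_\mu^{\pi^{(t)}}(s)\pi_i^{(t)}(a\mid s)A_i^{(t)}(s,a)^2\to0$ for every agent $i$. The delicate point — the one I expect to be the main obstacle — is that this weighted quantity can vanish while $A_i(s,a)$ stays bounded away from $0$ on a coordinate whose probability $\pi_i(a\mid s)$ is collapsing to $0$, so monotonicity of $\Phi$ alone does not certify a Nash point, exactly because MPGs lack a unique optimal value. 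Here I would exploit the multiplicative structure of \eqref{eq:mwua}: the log-ratio $\log(\pi_i(a\mid s)/\pi_i(a'\mid s))$ changes additively by $\frac{\eta}{1-\gamma}(A_i(s,a)-A_i(s,a'))$, so a coordinate that retains a strictly positive advantage in the limit would have its probability driven up multiplicatively and could not converge to $0$. This rules out profitable deviations at zero-probability actions and, together with $A_i=0$ on the (bounded-below) support, gives the best-response conditions $A_i^{\pi^*}(s,a)\le0$ for all $a$ with equality on the support at every state and agent — i.e.\ $\pi^*$ is an equilibrium policy. Finally, to upgrade subsequential convergence to genuine last-iterate convergence I would argue that coordinates with strictly negative limiting advantage decay monotonically to $0$ while the supported coordinates have vanishing increments, pinning down a unique limit of the trajectory.
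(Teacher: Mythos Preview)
Your proposal is correct and follows essentially the same three-step strategy as the paper: identifying INPG with natural gradient ascent on $\Phi$ (Lemma~\ref{lem:equivalence}), establishing monotone ascent via a Mahalanobis-norm smoothness bound (Proposition~\ref{cl:mahalanobis} and Lemma~\ref{lem:increasingphi}), and then using the MWU structure to rule out zero-probability actions with strictly positive limiting advantage (Lemma~\ref{lem:convergencenash}).

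Two small points of divergence are worth flagging. First, $A_{\max}=\max_i|\mathcal A_i|$ in this paper is the maximum \emph{number of actions}, not a bound on the advantage function; the $A_{\max}^2$ factor in the smoothness constant comes from the second-derivative-of-softmax terms in the Hessian calculation, not from bounding advantages. Second, for the passage from ``every limit point is a fixed point'' to genuine pointwise convergence, the paper does not use your coordinate-wise monotonicity sketch but instead (in Lemma~\ref{lem:convergencenash}) invokes the extra assumption that the fixed points are \emph{isolated}, so that the connected $\omega$-limit set produced by Lemma~\ref{lem:cormain} is a singleton. Your alternative is plausible, but as written it is slightly circular: speaking of a well-defined ``limiting advantage'' at each coordinate already presupposes convergence of the full policy profile, which is precisely what remains to be shown.
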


 The first step towards our result is to show that INPG is equivalent to running Natural Policy Gradient (NPG) on the potential function. 
\begin{lemma}\label{lem:equivalence}
We consider Natural Policy Gradient dynamics on function $\Phi^{\pi_{\theta}}$, that gives
\begin{equation}\label{eq:npgpotential}
\theta^{(t+1)} = \theta^{(t)} +\eta F_{\mu}^{+}(\theta^{(t)}) \nabla_{\theta} \Phi^{\pi_{\theta^{(t)}}}
\end{equation}
where $F_{\mu}^{+}(\theta)$ is the pseudo-inverse of the Fischer matrix 
\begin{align*}
F_{\mu}(\theta) = \mathbb{E}_{s\sim d^{\pi_{\theta}}_{\mu}} \mathbb{E}_{\a \sim \pi_{\theta}(.|s)}  
&\left[\nabla_{\theta} \log \pi_{\theta}(\a|s) \nabla_{\theta} \log \pi_{\theta}(\a|s) ^{\top}\right].
\end{align*}

It is true that (\ref{eq:npg}) is the same dynamics as (\ref{eq:npgpotential}).
\end{lemma}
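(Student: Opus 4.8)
The plan is to exploit two structural facts: (i) the joint softmax policy is a product distribution across agents, which forces the Fisher information matrix $F_\mu(\theta)$ to be block diagonal; and (ii) the defining property of a Markov Potential Game, which forces the per-agent gradients of $\Phi$ and of $V_i$ to coincide. Once both are in hand, the block-$i$ component of the potential update (\ref{eq:npgpotential}) collapses exactly onto the agent-$i$ update (\ref{eq:npggeneral}), which by \Cref{lem:npgequations} is precisely (\ref{eq:npg}).

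First I would establish the block-diagonal structure of $F_\mu(\theta)$. Since each agent's policy depends only on its own logits and the agents act independently given the state, the joint log-likelihood splits as $\log \pi_\theta(\mathbf{a}\mid s)=\sum_{i\in N}\log\pi_{\theta_i}(a_i\mid s)$, so $\nabla_\theta\log\pi_\theta(\mathbf{a}\mid s)$ is the concatenation of the per-agent scores $\nabla_{\theta_i}\log\pi_{\theta_i}(a_i\mid s)$, each depending only on $a_i$. The diagonal blocks of $F_\mu(\theta)$ are then exactly the individual Fisher matrices $F_{i,\mu}(\theta)$. For an off-diagonal block with $i\neq j$, conditioning on $s$ and using the independence of $a_i,a_j$ factorizes the expectation as $\mathbb{E}_{a_i}[\nabla_{\theta_i}\log\pi_{\theta_i}(a_i\mid s)]\,\mathbb{E}_{a_j}[\nabla_{\theta_j}\log\pi_{\theta_j}(a_j\mid s)]^{\top}$, and since the expected score vanishes, $\mathbb{E}_{a_i\sim\pi_{\theta_i}}[\nabla_{\theta_i}\log\pi_{\theta_i}(a_i\mid s)]=0$, every off-diagonal block is zero. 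Hence $F_\mu(\theta)=\mathrm{diag}(F_{1,\mu}(\theta),\dots,F_{n,\mu}(\theta))$, and because the pseudo-inverse of a block-diagonal matrix is the block-diagonal matrix of the block pseudo-inverses, $F_\mu^{+}(\theta)=\mathrm{diag}(F_{1,\mu}^{+}(\theta),\dots,F_{n,\mu}^{+}(\theta))$.

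Next I would match the gradients. The MPG property (\ref{eq:potential}) states that $\Phi^{(\pi_i,\pi_{-i})}(\mu)-V_i^{(\pi_i,\pi_{-i})}(\mu)$ is independent of $\pi_i$, i.e. a function of $\pi_{-i}$ alone. Since varying $\theta_i$ changes only $\pi_i$ and leaves $\pi_{-i}$ fixed, differentiating in $\theta_i$ annihilates that term and yields $\nabla_{\theta_i}\Phi^{\pi_\theta}(\mu)=\nabla_{\theta_i}V_i^{\pi_\theta}(\mu)$ for every agent $i$. Consequently the $i$-th block of the potential-NPG step is $\eta F_{i,\mu}^{+}(\theta^{(t)})\nabla_{\theta_i}\Phi^{\pi_{\theta^{(t)}}}=\eta F_{i,\mu}^{+}(\theta^{(t)})\nabla_{\theta_i}V_i^{\pi_{\theta^{(t)}}}$, which is exactly the INPG update (\ref{eq:npggeneral}) for agent $i$. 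Stacking over all $i$ shows that (\ref{eq:npgpotential}) and the independent dynamics agree coordinate-wise, and \Cref{lem:npgequations} rewrites both in the advantage form (\ref{eq:npg}).

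I expect the main obstacle to be the off-diagonal vanishing argument together with the handling of the pseudo-inverse. The score-function factorization is clean, but one must check that the shared discounted state distribution $d_\mu^{\pi_\theta}$ appearing in $F_\mu(\theta)$ is identical to the one in each $F_{i,\mu}(\theta)$ (it is, since both are defined with respect to the same joint policy $\pi_\theta$), and that passing from the full, singular Fisher matrix to its pseudo-inverse genuinely preserves the block structure in the softmax overparametrization. The latter is where care is warranted: block diagonality of a matrix does imply block diagonality of its pseudo-inverse, but one should confirm that the resulting per-agent blocks are exactly the objects $F_{i,\mu}^{+}(\theta)$ whose action is specified in (\ref{eq:npggeneral}).
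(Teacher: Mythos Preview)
Your proposal is correct and follows essentially the same route as the paper: you (i) use the product structure of $\pi_\theta$ and the zero-mean score identity to show $F_\mu(\theta)$ is block diagonal with blocks $F_{i,\mu}(\theta)$ (hence so is its pseudo-inverse), and (ii) invoke the MPG identity to get $\nabla_{\theta_i}\Phi^{\pi_\theta}=\nabla_{\theta_i}V_i^{\pi_\theta}$, then combine these and appeal to \Cref{lem:npgequations} to land on (\ref{eq:npg}). The paper's proof is identical in substance, only reversing the order of steps (i) and (ii) and citing the advantage-form reduction as Lemma~5.1 of \cite{Aga20} rather than \Cref{lem:npgequations}.
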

\begin{proof}
The first fact we are going to use is that
\begin{equation}\label{eq:potentialtovalue}
\nabla_{\theta_i} \Phi^{\pi_{\theta}} = \nabla_{\theta_i} V_{i}^{\pi_{\theta}}. 
\end{equation}
This is true since 
$\nabla_{\pi_{\theta_i}} \Phi^{\pi_{\theta}}= \nabla_{\pi_{\theta_i}} V_i^{\pi_{\theta}}$ (Proposition B.1  in \cite{leonardos21}) and by applying the chain rule. Moreover, since $\pi_{\theta}$ is a product distribution, we have 
\begin{equation}\label{eq:log}
\log \pi_{\theta}(\a|s) = \sum_i \log \pi_{\theta_i}(a_i|s),
\end{equation}
where $\a = (a_1,...,a_n).$ Assuming $i\neq j$, we have
\begin{align}\label{eq:zero}
    \mathbb{E}_{\a\sim\pi_{\theta}} &[\nabla_{\theta} \pi_{\theta_i}(a_i|s) \nabla_{\theta}\pi_{\theta_j}(a_j|s)^{\top}] = \nonumber \\    &\mathbb{E}_{\a\sim\pi_{\theta}}[\nabla_{\theta_i} \pi_{\theta_i}(a_i|s)]\mathbb{E}_{\a\sim\pi_{\theta}} \left[\nabla_{\theta}\pi_{\theta_j}(a_j|s)\right]^{\top}=0, 
\end{align} 
where the first equality comes from independence and the last equality from the fact that the expectation of the derivative of the log-likelihood is zero. We conclude that
\begin{align*}
F_{\mu}(\theta) &= \mathbb{E}_{s\sim d^{\pi_{\theta}}_{\mu}} \mathbb{E}_{\a \sim \pi_{\theta}(.|s)} \left[\nabla_{\theta} \log \pi_{\theta}(a|s) \nabla_{\theta} \log \pi_{\theta}(a|s) ^{\top}\right]\\&\stackrel{(\ref{eq:log}),(\ref{eq:zero})}{=} \sum_{i \in N} \mathbb{E}_{s\sim d^{\pi_{\theta}}_{\mu}} \mathbb{E}_{\a \sim \pi_{\theta}(.|s)} \left[\nabla_{\theta} \log \pi_{\theta_i}(a_i|s) \nabla_{\theta} \log \pi_{\theta_i}(a_i|s) ^{\top}\right]\\&\stackrel{\textrm{prod.distr.}}{=}
\sum_{i \in N} \mathbb{E}_{s\sim d^{\pi_{\theta}}_{\mu}} \mathbb{E}_{\a_i \sim \pi_{\theta_i}(.|s)} \left[\nabla_{\theta} \log \pi_{\theta_i}(a_i|s) \nabla_{\theta} \log \pi_{\theta_i}(a_i|s) ^{\top}\right]
,
\end{align*}
i.e., the Fisher matrix $F_{\mu}(\theta)$ is block-diagonal with each block being the Fisher matrix of each individual player.
Finally, from Lemma 5.1 in \cite{Aga20} it holds that 
\begin{equation}\label{eq:Aga20}
F^{+}_{i,\mu}(\theta)\nabla_{\theta_i} V_{i}^{\pi_{\theta}}  = \frac{1}{1-\gamma} \mathbb{E}_{\a_{-i}\sim \pi_{\theta_{-i}}}A_{i}^{\pi_{\theta}}(s,a,\a_{-i}).
\end{equation}
The proof is now concluded by (\ref{eq:potentialtovalue}), (\ref{eq:Aga20}), and the fact that the pseudo-inverse of a block-diagonal matrix is block-diagonal with blocks that are the pseudo-inverse of each block of the original matrix.
\end{proof}
With Lemma \ref{lem:equivalence} established, we next show that the potential function is non-decreasing along the dynamics (\ref{eq:npg}).
This requires the following proposition, the proof of which can be found in the appendix:
\begin{proposition}[Smoothness — Mahalanobis]\label{cl:mahalanobis}
Let $\theta , \tilde{\theta}$ be such that $\norm{\theta-\tilde{\theta}}_{\infty} \leq \frac{\eta}{1-\gamma}.$ There exists a constant $L = \frac{27n^2 A^2_{\max}M}{(1-\gamma)^3}$ so that
\begin{align*}
-\Phi^{\pi_{\tilde{\theta}}}(\mu) &\leq -\Phi^{\pi_{\theta}}(\mu) + \langle -\nabla _{\theta}\Phi^{\pi_{\theta}}(\mu), \tilde{\theta}-\theta \rangle + frac{L}{2}\norm{\tilde{\theta}-\theta}^2_{D^{\theta}},
\end{align*}
where $D^{\theta}$ is a diagonal matrix of size $\left(\sum_{i} |S| \cdot |A_{i}|\right)\times \left(\sum_{i} |S| \cdot |A_{i}|\right)$ with diagonal entry $(i,s,a), (i,s,a)$ to be $d^{\pi_{\theta}}_{\mu}(s) \pi_{\theta_i}(a|s)$ and $A_{\max} = \max_{i\in N} |A_{i}|.$
\end{proposition}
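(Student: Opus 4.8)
The plan is to prove a second-order descent-type inequality by controlling the Hessian of the potential along the segment joining $\theta$ and $\tilde\theta$, measured in the data-dependent Mahalanobis geometry. Write $\delta = \tilde\theta - \theta$ and $\theta_u = \theta + u\delta$ for $u \in [0,1]$, and set $g(u) = -\Phi^{\pi_{\theta_u}}(\mu)$. Since the softmax map and the value/potential are $C^2$ in $\theta$, Taylor's theorem with integral remainder gives $g(1) = g(0) + g'(0) + \int_0^1 (1-u)\,g''(u)\,du$, where $g'(0) = \langle -\nabla_\theta\Phi^{\pi_\theta}(\mu),\, \delta\rangle$ reproduces the first-order term exactly. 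Because the weight $(1-u)$ is nonnegative on $[0,1]$, it suffices to prove the pointwise bound $g''(u) = -\,\delta^\top \mathrm{Hess}_\theta\big(\Phi^{\pi_{\theta_u}}(\mu)\big)\,\delta \le L\,\norm{\delta}^2_{D^\theta}$ for every $u \in [0,1]$; integrating then produces the factor $\frac{1}{2}$ and the claim.

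The core is therefore a uniform upper bound on the Hessian quadratic form in the $D^\theta$-geometry. I would begin from the softmax policy-gradient identity, which by the potential property (\ref{eq:potentialtovalue}) reads $\nabla_{\theta_{i,s,a}}\Phi^{\pi_\theta}(\mu) = \frac{1}{1-\gamma}\, d^{\pi_\theta}_\mu(s)\,\pi_{\theta_i}(a|s)\,A_i^{\pi_\theta}(s,a)$, and differentiate it once more with respect to $\theta_{j,s',a'}$. The key structural observation is that every logit derivative brings down a softmax Jacobian factor $\partial_{\theta_{j,s',a'}}\pi_{\theta_j}(b|s') = \pi_{\theta_j}(b|s')\big(\mathbf{1}[b = a'] - \pi_{\theta_j}(a'|s')\big)$, which contains an explicit factor $\pi_{\theta_j}(a'|s')$; combined with the factor $\pi_{\theta_i}(a|s)$ already present in the gradient, each Hessian entry indexed by $(i,s,a),(j,s',a')$ carries the product $\pi_{\theta_i}(a|s)\,\pi_{\theta_j}(a'|s')$. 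A Cauchy-Schwarz split $\sqrt{\pi_{\theta_i}(a|s)}\cdot\sqrt{\pi_{\theta_j}(a'|s')}$ across the two indices then converts the quadratic form directly into $\norm{\delta}^2_{D^\theta}$, which is precisely why the weighted norm, rather than a flat Euclidean one, is the right geometry. The remaining un-weighted factors are the derivatives of the discounted visitation distribution $d^{\pi}_\mu$ and of the expected advantage $A_i^{\pi}$, which I would bound using the standard single-agent sensitivity estimates of \cite{Aga20}; the action sums on each side contribute $A_{\max}^2$, and the discounting gives the $(1-\gamma)^{-3}$ scaling.

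Finally I would convert the weights evaluated at the intermediate point $\theta_u$ into the fixed weights of $D^\theta$. The visitation-distribution ratio $d^{\pi_{\theta_u}}_\mu(s)/d^{\pi_\theta}_\mu(s)$ is bounded by the mismatch coefficient $M$ by its very definition, which is where $M$ enters $L$; the action-probability ratio $\pi_{(\theta_u)_i}(a|s)/\pi_{\theta_i}(a|s)$ is bounded by $e^{2\norm{\theta_u-\theta}_\infty} \le e^{2\eta/(1-\gamma)}$, which is an absolute constant under the hypothesis $\norm{\theta-\tilde\theta}_\infty \le \eta/(1-\gamma)$ together with the smallness of $\eta$. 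These ratios therefore cost only numerical factors, which I would collect into the constant $27$. Assembling the three groups of terms with these conversions yields $g''(u) \le \frac{27\, n^2 A_{\max}^2 M}{(1-\gamma)^3}\,\norm{\delta}^2_{D^\theta}$, as desired.

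I expect the main obstacle to be the cross-agent coupling, which is genuinely absent from the single-agent analysis of \cite{Aga20}: the expected advantage $A_i^{\pi}(s,a)$ of agent $i$ depends on every other agent's logits, so $\mathrm{Hess}_\theta \Phi$ is dense across agents even though the Fisher matrix of Lemma~\ref{lem:equivalence} is block-diagonal. The hard part will be showing that these cross-agent sensitivity terms still admit the $\sqrt{\pi}\cdot\sqrt{\pi}$ weighting that closes the bound in the Mahalanobis norm, and that summing over the $n$ choices of each differentiated coordinate aggregates to only the factor $n^2$. By contrast, the Taylor reduction and the purely-softmax terms are routine; the weighted-norm extraction from the $d^{\pi}_\mu$ and $A_i^{\pi}$ derivatives, uniformly over agents, is the crux.
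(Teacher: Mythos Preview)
Your high-level plan matches the paper's: reduce to a uniform bound on the Hessian quadratic form in the $D^\theta$-geometry via Taylor's theorem, exploit the softmax structure to produce the policy factors, bound the remaining sensitivity terms with the single-agent estimates of \cite{Aga20}, and absorb the visitation-distribution discrepancy into the mismatch coefficient $M$ together with the policy-ratio bound $\pi_{\theta_u}/\pi_\theta \le e^{2\eta/(1-\gamma)}$. The paper also opens with exactly that ratio estimate (your $e^{2\|\theta_u-\theta\|_\infty}$ bound) before turning to the Hessian.

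The execution, however, is organized differently. Rather than dissecting individual Hessian entries and then invoking a Cauchy--Schwarz split, the paper absorbs the $D$-rescaling directly into the direction: it sets $u = (D^{\tilde\theta})^{-1}v$ with $\|v\|=1$, writes $V(t) = V_i^{\pi_{\theta+tu}}(\mu)$ via the one-step Bellman unrolling $V = \sum_s \mu(s)\sum_{\a}\pi_\theta(\a|s)\,Q_i^{\pi_\theta}(s,\a)$, and applies the product rule to $d^2V(0)/dt^2$. This yields three groups of terms in which the $D^{-1}$-scaling of $u$ cancels cleanly against the softmax Jacobians, giving $|d\pi/dt| \lesssim nA_{\max}$ and $|d^2\pi/dt^2| \lesssim n^2A_{\max}^2$; the $Q$-derivatives are handled by the resolvent identity $Q = e^\top(I-\gamma P)^{-1}r$ exactly as in \cite{Aga20}. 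The advantage of this route is that the weighted norm never has to be reassembled from entrywise pieces.

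One caution about your mechanism: the Mahalanobis weight is $d^{\pi_\theta}_\mu(s)\,\pi_{\theta_i}(a|s)$, not just $\pi_{\theta_i}(a|s)$, so extracting only $\sqrt{\pi_{\theta_i}(a|s)}\sqrt{\pi_{\theta_j}(a'|s')}$ does not by itself deliver $\|\delta\|^2_{D^\theta}$; you also need a $\sqrt{d^{\pi}_\mu(s)\,d^{\pi}_\mu(s')}$ factor on each entry, and for the term coming from $\partial_{\theta_{j,s',a'}}A_i^{\pi}(s,a)$ that factor is not manifestly present. Moreover, an entrywise bound followed by a global Cauchy--Schwarz over all indices typically incurs an extra dimension factor $\sum_i|\mathcal S||\mathcal A_i|$ unless you group terms state-by-state, which is precisely what the paper's directional/Bellman decomposition does for you. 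Your plan can be made to work, but closing it will likely push you toward the same $V=\mu^\top\pi Q$ organization the paper uses.
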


\begin{lemma}[$\Phi^{(t)}$ is non-decreasing]\label{lem:increasingphi} Assume that the agents use the dynamics (\ref{eq:npg}). It holds that
\begin{equation}\label{eq:phiincreasing}
    \Phi^{(t+1)}(\mu) \geq \Phi^{(t)}(\mu) \textrm{ for all } t.
\end{equation}
\end{lemma}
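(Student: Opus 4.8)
The plan is to use Proposition~\ref{cl:mahalanobis} as a descent lemma, applied with $\theta=\theta^{(t)}$ and $\tilde\theta=\theta^{(t+1)}$, and to exploit the fact that the conditioning matrix $D^{\theta}$ is tailored exactly to the softmax natural-gradient step. First I would write the gradient of the potential in closed form. By the softmax policy gradient theorem applied to $V_i$, together with the identity $\nabla_{\theta_i}\Phi^{\pi_{\theta}}=\nabla_{\theta_i}V_i^{\pi_{\theta}}$ from (\ref{eq:potentialtovalue}), the $(i,s,a)$ coordinate of the gradient is
\[
\nabla_{\theta_{i,s,a}}\Phi^{\pi_{\theta^{(t)}}}(\mu)=\frac{1}{1-\gamma}\,d^{\pi_{\theta^{(t)}}}_{\mu}(s)\,\pi_{\theta_i^{(t)}}(a|s)\,A_i^{(t)}(s,a).
\]
The crucial observation is that the prefactor $d^{\pi_{\theta}}_{\mu}(s)\,\pi_{\theta_i}(a|s)$ is precisely the $(i,s,a)$ diagonal entry of $D^{\theta^{(t)}}$.

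Next I would establish the key identity. Writing $\Delta\theta=\theta^{(t+1)}-\theta^{(t)}$, whose $(i,s,a)$ coordinate equals $\frac{\eta}{1-\gamma}A_i^{(t)}(s,a)$ by (\ref{eq:npg}), and combining with the gradient formula, a direct computation gives
\[
\langle\nabla_{\theta}\Phi^{\pi_{\theta^{(t)}}}(\mu),\Delta\theta\rangle=\frac{\eta}{(1-\gamma)^2}\sum_{i,s,a}d^{\pi_{\theta}}_{\mu}(s)\,\pi_{\theta_i}(a|s)\big(A_i^{(t)}(s,a)\big)^2=\frac{1}{\eta}\,\norm{\Delta\theta}_{D^{\theta^{(t)}}}^2,
\]
where the last equality uses $\norm{\Delta\theta}_{D^{\theta^{(t)}}}^2=\frac{\eta^2}{(1-\gamma)^2}\sum_{i,s,a}d^{\pi_{\theta}}_{\mu}(s)\,\pi_{\theta_i}(a|s)\big(A_i^{(t)}(s,a)\big)^2$. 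This is the heart of the argument: the natural-gradient step forces the first-order term to equal $\frac{1}{\eta}$ times the very Mahalanobis norm that governs the second-order remainder in Proposition~\ref{cl:mahalanobis}.

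To invoke the proposition I must first check its hypothesis $\norm{\Delta\theta}_{\infty}\le\frac{\eta}{1-\gamma}$, which follows from boundedness of the advantage function (rewards lie in $[0,1]$, so the advantages are controlled). The proposition with $\theta=\theta^{(t)}$, $\tilde\theta=\theta^{(t+1)}$ then yields, after moving the linear term to the other side and substituting the identity above,
\[
\Phi^{(t+1)}(\mu)\ge\Phi^{(t)}(\mu)+\langle\nabla_{\theta}\Phi^{(t)}(\mu),\Delta\theta\rangle-\frac{L}{2}\norm{\Delta\theta}_{D^{\theta^{(t)}}}^2=\Phi^{(t)}(\mu)+\Big(\frac{1}{\eta}-\frac{L}{2}\Big)\norm{\Delta\theta}_{D^{\theta^{(t)}}}^2.
\]
Since $\eta<\frac{1}{L}=\frac{(1-\gamma)^3}{27n^2A_{\max}^2M}$, the coefficient $\frac{1}{\eta}-\frac{L}{2}$ is strictly positive, and the squared Mahalanobis norm is nonnegative, so $\Phi^{(t+1)}(\mu)\ge\Phi^{(t)}(\mu)$, as claimed.

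The conceptually heavy part — smoothness of $\Phi$ in the Fisher-induced Mahalanobis geometry — is exactly the already-granted Proposition~\ref{cl:mahalanobis}; the reason ordinary Euclidean smoothness would not close the argument is that a natural-gradient step can be large in $\ell_2$ while remaining small in the $D^{\theta}$ geometry. Given the proposition, the only genuinely delicate step is the identity of the second paragraph: one has to recognize that $D^{\theta}$ was defined precisely so that $\langle\nabla\Phi,\Delta\theta\rangle$ and $\norm{\Delta\theta}_{D^{\theta}}^2$ differ by exactly the factor $\eta$, which is what converts the descent inequality into monotonicity. The bookkeeping of the $(1-\gamma)$ powers and the verification of the $\ell_\infty$ radius in the hypothesis are the only places where a constant could slip, and I would treat those as routine.
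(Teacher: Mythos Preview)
Your proof is correct and follows essentially the same approach as the paper: compute the softmax gradient of $\Phi$, recognize that the INPG step is the preconditioned gradient $\eta(D^{\theta^{(t)}})^{-1}\nabla_\theta\Phi^{(t)}$, verify the $\ell_\infty$ hypothesis, apply Proposition~\ref{cl:mahalanobis}, and conclude monotonicity for $\eta<1/L$. The only cosmetic difference is that the paper expresses the key quantity as $\eta\norm{\nabla_\theta\Phi^{(t)}}^2_{(D^{(t)})^{-1}}$ rather than your equivalent $\tfrac{1}{\eta}\norm{\Delta\theta}^2_{D^{(t)}}$; your bookkeeping of the $1/(1-\gamma)$ factor in the gradient formula is in fact more careful than the paper's.
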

\begin{proof}
Using Lemma \ref{lem:equivalence}, it suffices to focus on the dynamics (\ref{eq:npgpotential}), i.e., we will analyze Natural Policy Gradient on the potential function $\Phi^{\pi_{\theta}}(\mu)$. For agent $i$ it holds that
\begin{align*}
 \frac{\partial \Phi^{\pi_{\theta}}}{\partial \theta_{i,s,a}} &=  \frac{\partial V_i^{\pi_{\theta}}}{\partial \theta_{i,s,a}} = d^{\pi_{\theta}}_{\mu}(s)\pi_{\theta_i}(a|s) \mathbb{E}_{\a_{-i}\sim \pi_{\theta_{-i}}}A_i^{\pi_{\theta}}(s,a,\a_{-i}),
\end{align*}
hence by defining $D^{(t)}$ to be a diagonal matrix of size $\left(\sum_{i} |\mathcal S| \times |\mathcal A_{i}|\right)^2$ with diagonal entry $(i,s,a_{i}), (i,s,a_{i})$ to be $d^{\pi_{\theta^{(t)}}}_{\mu}(s) \pi_{\theta^{(t)}_i}(a_i|s)$ we have
that NPG on $\Phi$ is the same as
\begin{equation}\label{eq:diagonal}
\theta^{(t+1)} = \theta^{(t)} + \eta \left(D^{(t)}\right)^{ -1} \nabla_{\theta}\Phi^{(t)}(\mu).
\end{equation}
Since all rewards are in $[0,1]$ it follows that $\norm{\theta^{(t+1)} - \theta^{(t)}}_{\infty} \leq \frac{\eta}{1-\gamma}.$ 
Given \Cref{cl:mahalanobis}, 
it follows that
\begin{align*}
    -\Phi^{(t+1)}(\mu) &\leq -\Phi^{(t)}(\mu) + \langle -\nabla _{\theta}\Phi^{(t)}(\mu), \theta^{(t+1)}-\theta^{(t)} \rangle + \frac{L}{2}\norm{\theta^{(t+1)}-\theta^{(t)}}^2_{D^{(t)}}\\&=
    -\Phi^{(t)}(\mu) - \eta \norm{\nabla _{\theta}\Phi^{(t)}(\mu)}^2_{\left(D^{(t)}\right)^{-1}}+ \frac{\eta^2 L}{2}\norm{\nabla _{\theta}\Phi^{(t)}(\mu)}^2_{\left(D^{(t)}\right)^{-1}}. 
\end{align*}
Therefore, by choosing $\eta < \frac{1}{L}$ we have that
\begin{equation}
    \Phi^{(t+1)} \geq \Phi^{(t)} + \frac{1}{2L}\norm{\nabla _{\theta}\Phi^{(t)}(\mu)}^2_{\left(D^{(t)}\right)^{-1}} > \Phi^{(t)} ,
\end{equation}
and the proof is complete. 
\end{proof}

From Lemma \ref{lem:increasingphi} and the fact that $\Phi$ is defined on a compact domain and is continuous (hence attains a maximum) we conclude that if $\eta < \frac{(1-\gamma)^3}{27n^2 A^2_{\max}M}$ then $\Phi^{(t)}$ converges to a limit $\Phi^{(\infty)}$. We shall show that the corresponding policies are indeed equilibrium policies. First, we prove that the limit $\Phi^{(\infty)}$ is a fixed point of the Independent Natural Policy Gradient dynamics.

\begin{lemma}[Convergence to fixed points]\label{lem:cormain} INPG dynamics (\ref{eq:npg}) converges to fixed points.
\end{lemma}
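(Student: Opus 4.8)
The plan is to convert the quantitative improvement behind Lemma~\ref{lem:increasingphi} into a summability statement, read off that every limit point satisfies the fixed-point condition of Remark~\ref{rem:fixedpoints}, and then upgrade this to genuine pointwise convergence, which is the delicate part.

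First I would not use the bare inequality $\Phi^{(t+1)}\ge\Phi^{(t)}$ but the sharper form produced inside the proof of Lemma~\ref{lem:increasingphi}: for $\eta<1/L$ there is a constant $c_0>0$ (explicitly $c_0=\eta(1-\eta L/2)>0$) with $\Phi^{(t+1)}(\mu)-\Phi^{(t)}(\mu)\ge c_0\,\norm{\nabla_{\theta}\Phi^{(t)}(\mu)}_{(D^{(t)})^{-1}}^2$. Since $\Phi^{(t)}(\mu)$ is nondecreasing and bounded above (rewards lie in $[0,1]$ over a compact domain), it converges to some $\Phi^{(\infty)}$, and telescoping gives $\sum_t\norm{\nabla_{\theta}\Phi^{(t)}(\mu)}_{(D^{(t)})^{-1}}^2<\infty$; in particular this term tends to $0$. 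Plugging in $\partial\Phi^{(t)}/\partial\theta_{i,s,a}=d^{\pi^{(t)}}_{\mu}(s)\pi^{(t)}_i(a\mid s)A_i^{(t)}(s,a)$ together with the diagonal entries $d^{\pi^{(t)}}_{\mu}(s)\pi^{(t)}_i(a\mid s)$ of $D^{(t)}$, this norm collapses to $\norm{\nabla_{\theta}\Phi^{(t)}(\mu)}_{(D^{(t)})^{-1}}^2=\sum_{i,s,a}d^{\pi^{(t)}}_{\mu}(s)\,\pi^{(t)}_i(a\mid s)\,\big(A_i^{(t)}(s,a)\big)^2$, so this weighted sum of squared expected advantages is summable and vanishes in the limit.

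Next I would extract the fixed-point structure. Because $d^{\pi}_{\mu}(s)\ge(1-\gamma)\mu(s)$ is bounded below (this is also what keeps the mismatch coefficient $M$ finite), along any convergent subsequence $\pi^{(t_k)}\to\pi^{*}$ each summand $d^{\pi^{(t_k)}}_{\mu}(s)\pi^{(t_k)}_i(a\mid s)\big(A_i^{(t_k)}(s,a)\big)^2\to0$, and continuity of $d^{\pi}_{\mu}$, $\pi$ and the advantages in $\theta$ forces $\pi^{*}_i(a\mid s)=0$ or $A_i^{\pi^{*}}(s,a)=0$ for every $(i,s,a)$ — exactly the fixed-point condition of Remark~\ref{rem:fixedpoints}. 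Feeding the same bound through the multiplicative-weights form~(\ref{eq:mwua}) and a Cauchy--Schwarz step also yields that $\norm{\pi^{(t+1)}-\pi^{(t)}}^2$ is summable, so the increments vanish and the limit set is compact, connected and invariant.

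The hard part is passing from ``every limit point is a fixed point'' to ``the iterates converge to a single point,'' which is exactly where the non-uniqueness of optimal values bites: the limit set could a priori be a continuum of fixed points all sharing the value $\Phi^{(\infty)}$. I would proceed in two stages. Using $\Phi^{(t)}(\mu)\le\Phi^{(\infty)}=\Phi^{\pi^{*}}(\mu)$ for all $t$, I first rule out any limit point with a strictly beneficial non-support deviation (an action with $\pi^{*}_i(a\mid s)=0$ but $A_i^{\pi^{*}}(s,a)>0$): near such a point~(\ref{eq:mwua}) steadily increases $\theta_{i,s,a}$, so the probability of $a$ grows and $\Phi$ is eventually pushed above $\Phi^{(\infty)}$, contradicting monotonicity; hence at every limit point the non-support advantages are $\le0$ and the support advantages vanish. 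Second, I exploit~(\ref{eq:mwua}) directly: once the trajectory is close, non-support actions with strictly negative advantage have their probabilities contracted geometrically and thus contribute a summable path length, while the residual motion lives on the face spanned by the support, where the advantages are $o(1)$. Controlling this within-face drift is the genuine obstacle, because square-summability of the increments does not by itself bound the total path length; I expect to close it via a {\L}ojasiewicz-type inequality, available since $\Phi^{\pi_{\theta}}(\mu)$ is real-analytic in the policy (a rational function of the entries of $(I-\gamma P)^{-1}$), which converts the summable energy into a finite path length on the face and pins down a unique limit.
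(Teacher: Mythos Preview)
Your first three paragraphs correctly establish what the paper actually proves in this lemma, namely that the $\omega$-limit set of the policy trajectory consists entirely of fixed points of (\ref{eq:mwua}). You do this quantitatively, by telescoping the improvement inequality from Lemma~\ref{lem:increasingphi} to get summability of $\sum_{i,s,a}d^{\pi^{(t)}}_{\mu}(s)\,\pi^{(t)}_i(a\mid s)\,(A_i^{(t)}(s,a))^2$ and reading off the fixed-point condition along subsequences. The paper instead argues qualitatively: the limit set $\Omega$ is forward-invariant under the (continuous) MWU map, $\Phi$ is constant on $\Omega$ because $\Phi^{(t)}$ converges, and $\Phi$ is strictly increasing off fixed points by the same inequality you used; hence any trajectory starting in $\Omega$ must be a fixed point. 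Both routes land at the same conclusion; yours gives a rate-type estimate as a byproduct, while the paper's is shorter and avoids unpacking the norm.

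Where you diverge from the paper is in your last paragraph: you interpret ``converges to fixed points'' as pointwise convergence to a single fixed point and try to prove that, invoking a {\L}ojasiewicz inequality. The paper does \emph{not} prove this here. Its Lemma~\ref{lem:cormain} stops at ``$\Omega$ consists entirely of fixed points,'' and uniqueness of the limit is deferred to Lemma~\ref{lem:convergencenash}, which simply \emph{assumes} the fixed points are isolated. So the hard part you are wrestling with is not part of this lemma's content in the paper. Your proposed route also has real obstacles: $\Phi^{\pi_\theta}$ is analytic in $\pi$ on the open simplex, but limit points typically lie on the boundary, and the natural parametrization $\theta$ is unbounded, so a standard {\L}ojasiewicz gradient inequality is not directly available. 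Moreover, your ``first stage'' argument (ruling out limit points with $\pi^{*}_i(a\mid s)=0$ and $A_i^{\pi^{*}}(s,a)>0$ because $\Phi$ would be ``pushed above $\Phi^{(\infty)}$'') does not yield a contradiction as stated: monotone increase of $\pi_i^{(t)}(a\mid s)$ near such a point contradicts $\pi_i^{(t)}(a\mid s)\to 0$ only once you already know the trajectory stays nearby, i.e., once you already have a unique limit --- which is precisely what you are trying to prove. That is in fact the argument the paper uses in Lemma~\ref{lem:convergencenash}, \emph{after} assuming isolated fixed points to secure convergence.
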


\begin{proof}
Let $\Omega \subset \Delta(\mathcal A)^{|\mathcal S|}$ be the set of limit points of  $\pi_{\theta^{(t)}}$.  $\Phi^{(t)}$ is increasing with respect to time $t$ by Lemma \ref{lem:increasingphi} and so, because $\Phi$ is bounded on $\Delta(\mathcal A)^{|\mathcal S|}$, $\Phi^{(t)}$ converges as $t\to \infty$  to $\Phi^{(\infty)}= \inf_t\{\Phi^{(t)}\}$. By continuity of $\Phi$ we get that \mbox{$\Phi^{q}= \lim_{t\to\infty} \Phi^{q_{\theta^{(t)}}} = \Phi^{(\infty)}$} for all $q \in \Omega$. So $\Phi$ is constant on $\Omega$. Also \mbox{$q^{(t)} = \lim_{n \to \infty} q^{(t_n + t)}$} as $n \to \infty $ for some sequence of times $\{t_i\}$ and so $q^{(t)}$ lies in $\Omega$, i.e. $\Omega$ is invariant. Thus, if $q \equiv q^{(0)} \in \Omega$ the trajectory $q^{(t)}$ lies in $\Omega$ and so $\Phi^{q^{(t)}} = \Phi^{\infty}$ on the trajectory. But $\Phi$ is strictly increasing except on fixed points and so $\Omega$ consists entirely of fixed points.
\end{proof}

We conclude the section by showing that if $\pi^{(0)}$ is in the interior of $\Delta(\mathcal A)^{|\mathcal S|}$ (all $\theta$ are bounded at initialization) then we have convergence to equilibrium policies.
\begin{lemma}[Convergence to equilibrium policies] \label{lem:convergencenash} Assume that the fixed points of (\ref{eq:npg}) are isolated. Let $\pi^{(0)}$ be a point in the interior of $\Delta(\mathcal A)^{|\mathcal S|}$. It follows that $\lim_{t \to \infty} \pi^{(t)} = \pi^{(\infty)}$ is an equilibirum policy.
\end{lemma}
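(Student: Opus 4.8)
The plan is to combine three facts already at our disposal: (i) by Lemma~\ref{lem:cormain} the limit set $\Omega$ consists entirely of fixed points of \eqref{eq:npg}; (ii) by the fixed-point characterization in Remark~\ref{rem:fixedpoints}, every fixed point $\pi$ satisfies, for each agent $i$, state $s$, and action $a$, either $\pi_i(a\mid s)=0$ or $A_i^{\pi}(s,a)=0$; and (iii) the starting point $\pi^{(0)}$ lies in the interior of the simplex, so no coordinate of $\theta^{(0)}$ is $-\infty$. First I would argue that the \emph{isolated} fixed-point assumption upgrades the set-convergence of Lemma~\ref{lem:cormain} to genuine pointwise convergence. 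Since $\Omega$ is contained in the finite (hence discrete) set of isolated fixed points, $\Omega$ is connected as the limit set of a trajectory with asymptotically vanishing step displacement, so $\Omega$ must be a single point $\pi^{(\infty)}$; therefore $\lim_{t\to\infty}\pi^{(t)}=\pi^{(\infty)}$ exists.

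Next I would verify that the fixed point $\pi^{(\infty)}$ is actually an equilibrium policy, which is where the interiority of $\pi^{(0)}$ enters. The issue is that the fixed-point condition permits $\pi_i(a\mid s)=0$ with $A_i^{\pi}(s,a)>0$, which would be a spurious (non-Nash) fixed point: an action with strictly positive advantage that receives zero probability. To rule this out I would track the logits along \eqref{eq:npg}. Because \eqref{eq:npg} is equivalent to \eqref{eq:mwua}, the probability of any action evolves multiplicatively, so an action that starts with positive probability (guaranteed by interiority) can only reach probability zero in the limit, and does so precisely when its cumulative advantage $\sum_{t} A_i^{(t)}(s,a)$ tends to $-\infty$. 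I would show that if $A_i^{\pi^{(\infty)}}(s,a)>0$ at the limit, then by continuity of the advantage function $A_i^{(t)}(s,a)$ stays bounded below by a positive constant for all large $t$, forcing the cumulative sum of advantages to tend to $+\infty$, hence $\pi_i^{(t)}(a\mid s)$ cannot decay to $0$ — contradicting $\pi_i^{(\infty)}(a\mid s)=0$. Consequently at every coordinate with $\pi_i^{(\infty)}(a\mid s)=0$ we must have $A_i^{\pi^{(\infty)}}(s,a)\le 0$, and at every coordinate with $\pi_i^{(\infty)}(a\mid s)>0$ we have $A_i^{\pi^{(\infty)}}(s,a)=0$.

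Finally I would translate this advantage condition into the Nash condition of Definition~\ref{def:nashpolicy}. The derived conditions say exactly that for each agent $i$ and state $s$, the support of $\pi_i^{(\infty)}(\cdot\mid s)$ lies on actions of maximal (zero) advantage and all off-support actions have nonpositive advantage; by the performance-difference / advantage characterization of best responses in the single-agent MDP faced by agent $i$ when the others are fixed at $\pi_{-i}^{(\infty)}$, this means $\pi_i^{(\infty)}$ maximizes $V_i^{(\cdot,\pi_{-i}^{(\infty)})}(s)$ for every $s$. Since this holds for all agents simultaneously, $\pi^{(\infty)}$ is an equilibrium policy.

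\medskip

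The main obstacle I anticipate is the second step: ruling out spurious fixed points where a positive-advantage action has been driven to probability zero. This is the crux that makes the interiority of $\pi^{(0)}$ indispensable, and it requires the careful multiplicative-dynamics argument above rather than following from monotonicity of $\Phi$ alone. A secondary technical point is justifying the vanishing-displacement claim used to conclude $\Omega$ is a single point; this follows because $\norm{\theta^{(t+1)}-\theta^{(t)}}_\infty\le \frac{\eta}{1-\gamma}$ together with the isolation of fixed points and the fact that $\|\nabla_\theta\Phi^{(t)}\|\to 0$ (a consequence of the summable potential increments in Lemma~\ref{lem:increasingphi}), so the trajectory cannot oscillate between distinct isolated limit points.
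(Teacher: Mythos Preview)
Your proposal follows essentially the same route as the paper: use Lemma~\ref{lem:cormain} together with the isolated-fixed-points hypothesis to obtain a single limit $\pi^{(\infty)}$, and then argue by contradiction that an off-support action with strictly positive advantage at $\pi^{(\infty)}$ is impossible because, near the limit, the multiplicative update would push its probability \emph{up} rather than to zero. The paper's proof is exactly this.

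There is, however, one genuine imprecision in your contradiction step. You write that $\pi_i^{(t)}(a\mid s)\to 0$ ``precisely when its cumulative advantage $\sum_t A_i^{(t)}(s,a)$ tends to $-\infty$'' and that a positive limiting advantage forces the cumulative sum to $+\infty$, ``hence $\pi_i^{(t)}(a\mid s)$ cannot decay to $0$''. This ignores the normalizer in \eqref{eq:mwua}: iterating gives
\[
\log\pi_i^{(T)}(a\mid s)=\log\pi_i^{(0)}(a\mid s)+\tfrac{\eta}{1-\gamma}\sum_{t<T}A_i^{(t)}(s,a)-\sum_{t<T}\log Z_{i,s}^{(t)},
\]
so a diverging cumulative advantage alone does not preclude $\pi_i^{(t)}(a\mid s)\to 0$ unless you also control $\sum_t\log Z_{i,s}^{(t)}$. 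The paper closes this gap by observing that, because $\pi^{(\infty)}$ is a fixed point (Remark~\ref{rem:fixedpoints}), every supported action has zero advantage there, whence $Z_{i,s}^{\pi^{(\infty)}}=1$; continuity then gives an open neighborhood $U_\zeta=\{\pi:\tfrac{\eta}{1-\gamma}A_i^{\pi}(s,a)>\zeta+\log Z_{i,s}^{\pi}\}$ containing $\pi^{(\infty)}$, and once the trajectory enters $U_\zeta$ the per-step multiplicative factor exceeds $1$, so $\pi_i^{(t)}(a\mid s)$ is eventually nondecreasing and positive --- contradicting $\pi_i^{(\infty)}(a\mid s)=0$. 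Your argument becomes correct once you insert this observation about $Z_{i,s}^{(t)}\to 1$; without it the ``hence'' does not follow.

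Your final paragraph translating the advantage conditions into Definition~\ref{def:nashpolicy} via the single-agent best-response characterization is a step the paper leaves implicit (it simply asserts that failure of equilibrium implies existence of $(i,s,a)$ with $A_i^{\pi^{(\infty)}}(s,a)>0$ and $\pi_i^{(\infty)}(a\mid s)=0$); making it explicit is fine and arguably cleaner.
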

\begin{proof} We showed in Lemma \ref{lem:cormain} that INPG dynamics (\ref{eq:npg}) converges, hence $\lim_{t \to \infty} \pi^{(t)}$ exists (under the assumption that the fixed points are isolated) and is equal to a fixed point of the dynamics $\pi^{(\infty)}$. Also it is clear from the dynamics that $\Delta(\mathcal A)^{|\mathcal S|}$ is invariant, i.e., for all $t\geq 0$, we have $\sum_{a\in \mathcal A_i}\pi_i^{(t)}(a|s)=1$ for all $s\in \mathcal S$, $i\in N$ and $\pi^{(t)}_i(a|s)>0$ for all $a \in \mathcal A_i,s \in \mathcal S$, and $i \in N$ (since $\pi^{(0)}$ is in the interior of $\Delta(\mathcal A)^{|\mathcal S|}$).

Assume that $\pi^{(\infty)}$ is not an equilibrium policy, then there exists an agent $i$, a state $s$ and an action $a \in \mathcal A_i$ so that $A^{\pi^{(\infty)}}_{i}(s,a)>0$ and $\pi^{(\infty)}_i(a|s) =0.$ Fix a $\zeta>0$ and let $U_{\zeta}=\{\pi:\frac{\eta A^{\pi}_{i}(s,a)}{1-\gamma} >\zeta + \log Z^{\pi}_{i,s}\}$. By continuity we have that $U_{\zeta}$ is open. It is also true that $\pi^{(\infty)} \in U_{\zeta}$ for $\zeta$ small enough since $Z^{\pi^{(\infty)}}_{i,s} =1$.

Since $\pi^{(t)}$ converges to $\pi^{(\infty)}$ as $t\to \infty$, there exists a time $t_{0}$ so that for all $t' \geq t_0$ we have that $\pi^{(t')} \in U_{\zeta}$. However, from INPG dynamics (\ref{eq:npg}) we get that if $\pi^{(t')}\in  U_{\zeta}$ then $e^{\frac{\eta A^{(t')}_{i}(s,a) }{1-\gamma}}\geq e^{\zeta} Z^{(t')}_{i,s} \geq Z^{(t')}_{i,s}.$  Hence
$$\pi_{i}^{(t'+1)}(a|s) = \pi_{i}^{(t')}(a|s) \frac{\exp( \frac{\eta A^{(t)}_i(s,a)}{1-\gamma} )}{Z_{i,s}^{(t')}} \geq \pi_i^{(t')}(a|s)>0,$$

i.e., $\pi_{i}^{(t')}(a|s)$ is positive and increasing with \mbox{$t' \geq t_0$}. We reached a contradiction since \mbox{$\pi_{i}^{(t)}(a|s) \to \pi^{(\infty)}_{i}(a|s)=0$}, thus $\pi^{(\infty)}$ is an equilibrium policy.
\end{proof}

The proof of Theorem \ref{thm:mainconvergence} follows by Lemmas \ref{lem:cormain} and \ref{lem:convergencenash}.

\section{Experiments}

In this section we compare the empirical performance of Independent Natural Policy Gradient (INPG) to Independent Policy Gradient (IPG) in both the Stochastic Congestion Game (SCG) environment described in the Preliminaries and the environment introduced by \cite{leonardos21}, which we will call the \emph{distancing game}. We compare INPG to IPG in the distancing game setting to show empirical improvement over previous results, but also include this new SCG setting for several reasons. The distancing game considered in \cite{leonardos21} had only 2 states and the required number of iterations for the experiments to empirically converge (for policies to not change by more than $1e-16$ between iterates) was far less than theoretically guaranteed. The SCG setting we use, which has more states yet less agents, required many more iterations of IPG, demonstrating a significant gap in the empirically observed convergence rates of IPG and INPG. Experiments were run on a 2021 MacBook Air.

\subsection{Distancing Game}

For completeness, we briefly reproduce the description of the distancing game as given in \cite{leonardos21}.

\paragraph{Experimental setup:} We consider a MDP in which every state is a congestion game \cite{rosenthal73}. There are 8 agents, 4 facilities that the agents can select from and 2 states: a \emph{safe} state and a \emph{spread} state. In both states, the agents prefer to share their facility with as many other agents as possible. Specifically, the reward of each agent for choosing the action corresponding to facility $k$ is equal to a positive weight $w_k^{\text{safe}}$ times the number of agents at $k=A,B,C,D$. The weights satisfy $w_D^{\text{safe}}>w_C^{\text{safe}}>w_B^{\text{safe}}>w_A^{\text{safe}}$, i.e., facility $D$ is the most preferred facility by all agents. However, if more than 4 agents choose the same facility, then the MDP transitions to the spread state. The weight function is the same for all agents at the spread state as well, but the reward is reduced by a large constant amount $c>0$. To return to the safe state, the agents need to maximally spread out amongst the facilities, that is, no more than 2 agents may be in the same facility. 

\paragraph{Independent Policy Gradient Parameters:} We perform episodic updates with $T=20$ steps. At each iteration, we estimate the Q-functions, the value function, the discounted visitation distributions and, hence, the policy gradients using the average of mini-batches of size $20$. We use discount factor $\gamma=0.99$ and learning rate $\eta=0.0001$. Empirical convergence was determined to have occurred if the L1 distance between the consecutive policies of each agent was less than $10e-16$. 

\paragraph{Independent Natural Policy Gradient Parameters:} We use the same parameters for the INPG implementation as the IPG implementation. Specifically, we perform episodic updates with $T=20$ steps. We estimate the Q-functions and and value function at each iteration, which allows us to obtain an estimate for the advantage function that is used in the INPG update. This is done using mini-batches of size 20. Again the discount factor is $\gamma=0.99$ and the learning rate is $\eta=0.0001$. Empirical convergence was determined to have occurred if the L1 distance between the consecutive policies of each agent was less than $10e-16$. 

\paragraph{Results:}  \Cref{fig:ind_dg} and \Cref{fig:avg_dg} depict the L1-accuracy in the policy space at each iteration, i.e., $\text{L1-accuracy}=\frac1N\sum_{i\in \N}|\pi_i-\pi_i^{\text{final}}|=\frac1N\sum_{i\in \N}\sum_{s}\sum_{a}|\pi_i(a\mid s)-\pi_i^{\text{final}}(a\mid s)|.$ This is the average distance between the current policy and the final policy of all agents.

\begin{figure}[h]
\vspace{.05in}
\centerline{\includegraphics[width=.75\textwidth]{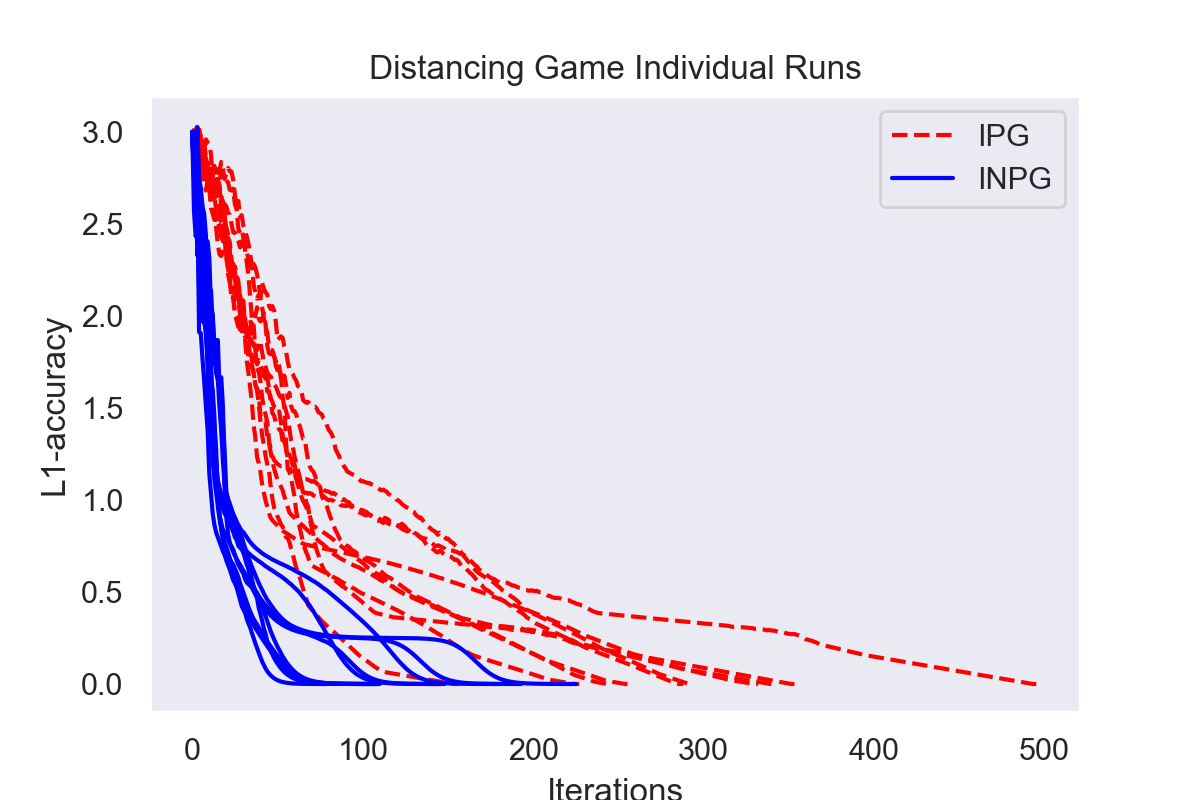}}
\vspace{.2in}
\caption{Trajectories of the L1-accuracy for 10 runs each of Independent Policy Gradient and Independent Natural Policy Gradient in the distancing game environment with 8 agents and learning rate of $\eta=.0001$.}
\label{fig:ind_dg}
\end{figure}

\begin{figure}[h]
\centerline{\includegraphics[width=0.75\textwidth]{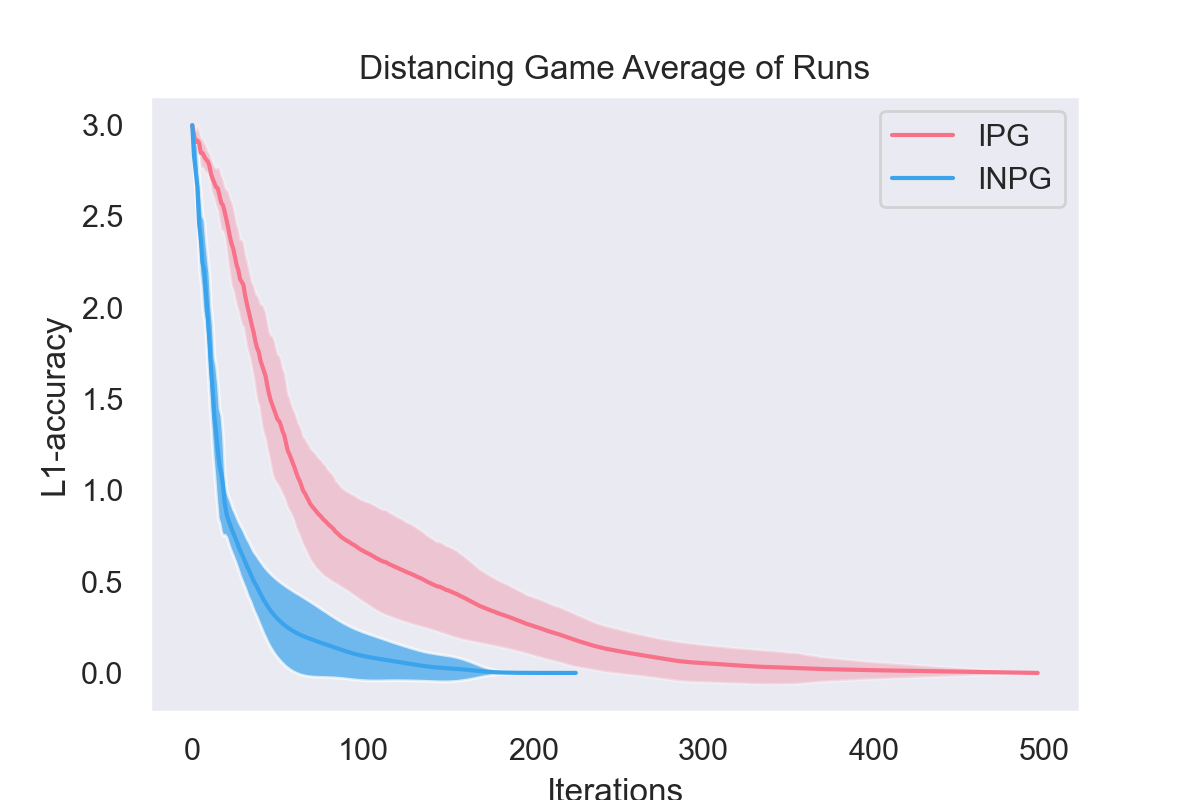}}
\vspace{.2in}
\caption{Mean L1-accuracy with shaded region of one standard deviation for the trajectories of \Cref{fig:ind_dg}.}
\label{fig:avg_dg}
\end{figure}

We can see from \Cref{fig:ind_dg} and \Cref{fig:avg_dg} that on average and in almost every run, Independent Natural Policy Gradient converges faster than Independent Policy Gradient. This matches our intuitive expectation that NPG methods should require fewer iterations than PG methods to achieve convergence \cite{Aga20}.

\subsection{Stochastic Congestion Game}

We now compare Independent Policy Gradient and Independent Natural Policy Gradient for the environment described in \Cref{ex:scg}.

\paragraph{Experimental setup:} We consider a MDP where each state corresponds to a configuration of $n$ agents on a graph such as in \Cref{fig:scg}, except that the internal layers are fully connected.The graph is directed and the edges only move from left to right. It is important to note that the states of the MDP are not the vertices of this graph but the entire configuration, which is captured by the location of each agent. Thus there are $|V|^n$ possible states in the MDP where $|V|$ is the number of vertices in the DAG. 

The actions of each agent at a given state of the MDP are to choose the edge from their current, corresponding vertex in the DAG that they want to move along. They then receive a reward inversely proportional to the number of agents who chose this edge and the MDP transitions to the next configuration state. If the agents reach $t$ we can either send them along a constant reward edge back to $s$ or consider the episode to be terminated. 

The specific DAG on which the experiments were run was with 2 internal fully-connected layers of 2 vertices each, giving the graph 6 total vertices consisting of $s$, $t$, and four internal vertices. Thus at each state each agent has two possible actions to choose from, corresponding to the two edges coming out of their vertex in the DAG. The specific value of $n$ chosen for comparing independent policy gradient and independent natural policy gradient was 4. This value was chosen due to the fact that for $n \geq 5$ agents, IPG required more than 3000 iterations, each of which require a nontrivial amount of time due to needing to make tabular updates for each state-action pair. 

\paragraph{Parameters:} All parameters and methods were the same as for IPG and INPG in the distancing game setting of the previous subsection. 

\paragraph{Results:} \Cref{fig:ind_scg} and \Cref{fig:avg_scg} depict the L1-accuracy in the policy space at each iteration, i.e., $\text{L1-accuracy}=\frac1N\sum_{i\in \N}|\pi_i-\pi_i^{\text{final}}|=\frac1N\sum_{i\in \N}\sum_{s}\sum_{a}|\pi_i(a\mid s)-\pi_i^{\text{final}}(a\mid s)|.$ This is the average distance between the current policy and the final policy of all agents.

\begin{figure}[h]
\vspace{.05in}
\centerline{\includegraphics[width=0.75\textwidth]{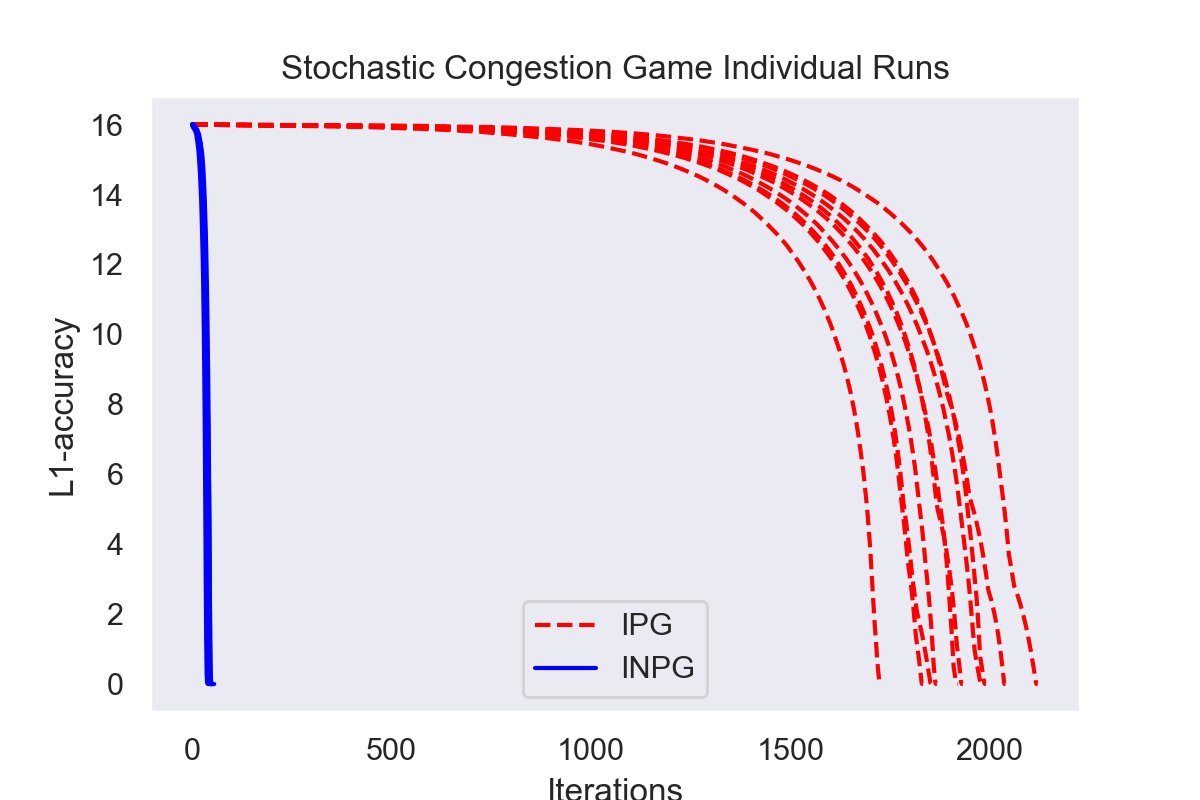}}
\vspace{.2in}
\caption{Trajectories of the L1-accuracy for 10 runs each of Independent Policy Gradient and Independent Natural Policy Gradient in the stochastic congestion game environment with 4 agents and learning rate of $\eta=.0001$.}
\label{fig:ind_scg}
\end{figure}

\begin{figure}[h]
\centerline{\includegraphics[width=0.75\textwidth]{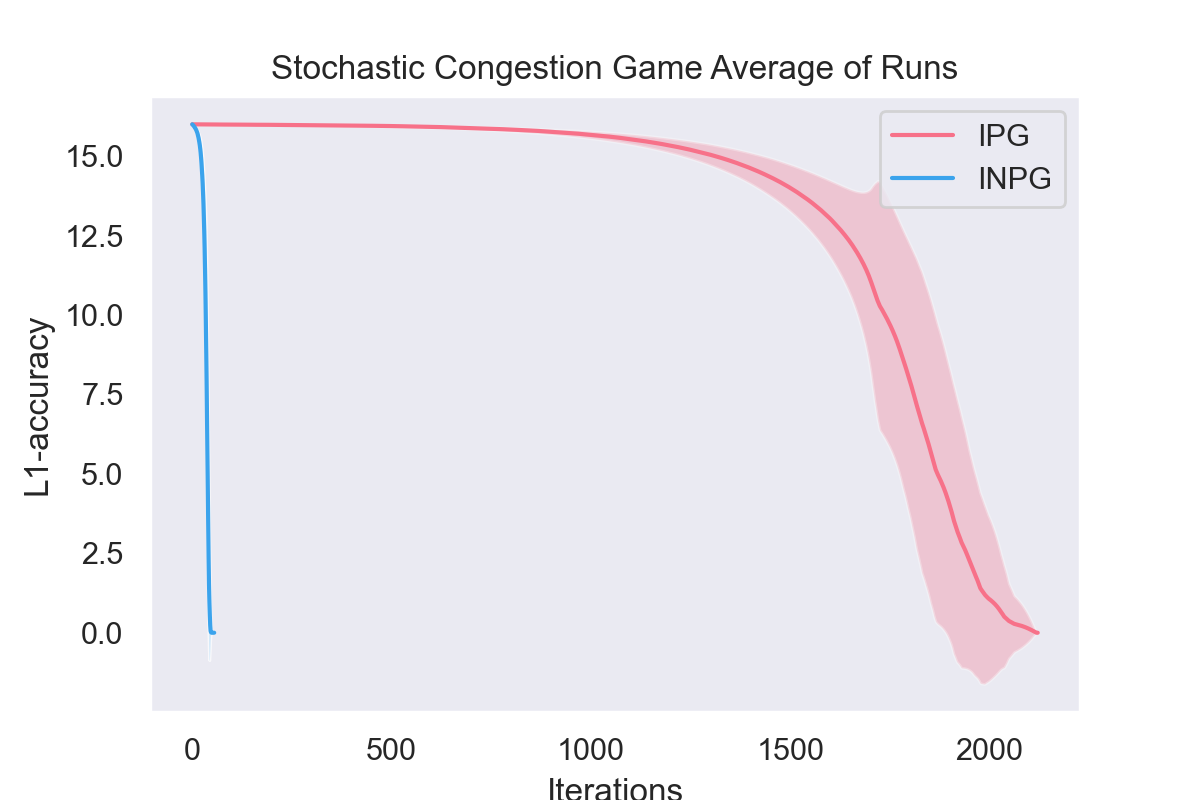}}
\vspace{.2in}
\caption{Mean L1-accuracy with shaded region of one standard deviation for the trajectories of \Cref{fig:ind_scg}.}
\label{fig:avg_scg}
\end{figure}

These results show a dramatic difference in performance of IPG and INPG in the stochastic congestion game environment. The agents running IPG took about 2000 iterations to converge, while the agents running INPG took only about 50. This experiment also shows that a key distinguishing aspect between single-agent PG and NPG also can appear in the multi-agent case; IPG suffers from a long flat region of little improvement when far from a Nash policy, while INPG quickly moves the agents towards a Nash policy. This is in line with observed differences between single-agent PG and NPG moving towards optimal policies \cite{Kak01}.

Furthermore, as mentioned above, IPG was not able to converge within 3000 iterations for $n \geq 5$ agents in the stochastic congestion game setting. In comparison, as see in \Cref{fig:ing_scg8}, INPG was able to converge in less than 100 iterations even with 8 agents. 

\begin{figure}[h]
\centerline{\includegraphics[width=0.75\textwidth]{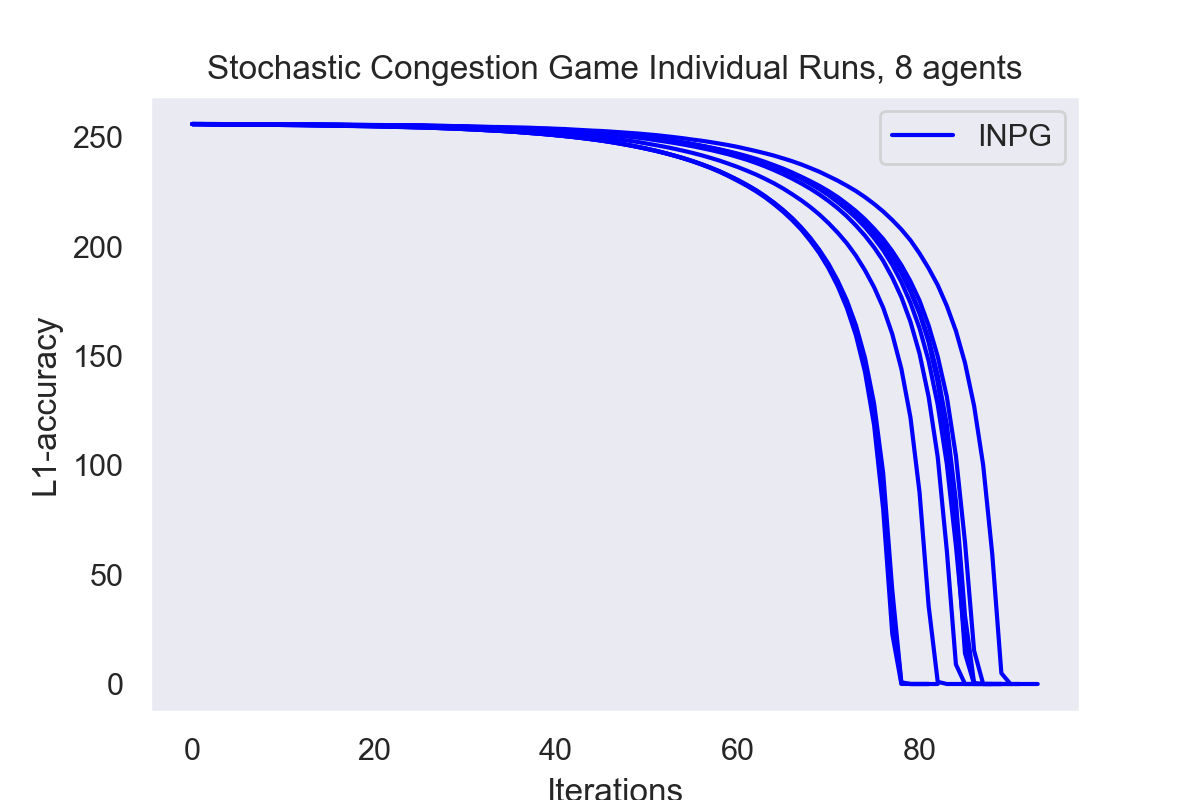}}
\vspace{.2in}
\caption{Trajectories of the L1-accuracy for 10 runs each of independent natural policy gradient in the stochastic congestion game environment with 8 agents and learning rate of $\eta=.0001$.}
\label{fig:ing_scg8}
\end{figure}

These experimental results highlight the key reason that guarantees regarding the convergence of independent natural policy gradient are necessary to obtain; INPG generally outperforms IPG in all of our experiments, in some cases by a significantly large margin. Again, this is in line with expectations based on the performance of single-agent policy gradient versus natural policy gradient \cite{Aga20}.

\section{Conclusion}
In this paper we showed that independent Natural Policy Gradient will always converge to a Nash equilibrium in Markov Potential Games under mild conditions. We also show experimental results where NPG converges much faster than independent policy gradients. 

In future work, we will prove rates of convergence for NPG in MPGs. We will also look into having each learner estimate advantages instead of receiving oracle advantages and will prove results with function approximation. It is also unknown whether trust region methods like TRPO will converge in MPGs which generalize cooperative games, for which TRPO convergence has been established \cite{kuba2021trust}. Going beyond this work, a large open problem remain proving (or disproving) the convergence of independent RL agents to correlated equilibrium in general-sum Markov games. 

\newpage

\section{Acknowledgements}

Roy Fox's research is partly funding by the Hasso Plattner Foundation. Ioannis Panageas and Will Overman are supported by a startup grant.

\bibliographystyle{plain}
\bibliography{bib_markov_potential}

\appendix

\section{Proof of Claim \ref{cl:mahalanobis}}
\begin{proof}
Let $\norm{\tilde{\theta}-\theta}_{\infty} \leq \frac{\eta}{1-\gamma}$. We show that 

\begin{equation}\label{eq:init}
1-\frac{2\eta}{1-\gamma} \leq \frac{\pi_{\theta_i}(a|s)}{\pi _{\tilde{\theta}_i}(a|s)} \leq 1+\frac{4\eta}{1-\gamma}
\end{equation}

for all agents $i$ and $a \in A_i.$
It holds that 
\begin{align*}
\pi_{\theta_i}(a|s) &= \frac{e^{\theta_{i,s,a}}}{\sum_{a'\in A_i}e^{\theta_{i,s,a'}}} \\&\leq \frac{e^{\tilde{\theta}_{i,s,a}+\frac{\eta}{1-\gamma}}}{\sum_{a'\in A_i}e^{\tilde{\theta}_{i,s,a'}-\frac{\eta}{1-\gamma}}} \\&= \pi_{\tilde{\theta}_i}(a|s) e^{\frac{2\eta}{1-\gamma}}\leq \pi_{\tilde{\theta}_i}(a|s) \left(1+4\frac{\eta}{1-\gamma}\right),
\end{align*}
where the last inequality works as $e^x \leq 2x+1$ for $x\in [0,1/2],$ i.e., we assume $\eta \leq \frac{(1-\gamma)}{2}.$ 
Similarly,
\begin{align*}
\pi_{\theta_i}(a|s) &= \frac{e^{\theta_{i,s,a}}}{\sum_{a'\in A_i}e^{\theta_{i,s,a'}}} \\&\geq \frac{e^{\tilde{\theta}_{i,s,a}-\frac{\eta}{1-\gamma}}}{\sum_{a'\in A_i}e^{\tilde{\theta}_{i,s,a'}+\frac{\eta}{1-\gamma}}} \\&= \pi_{\tilde{\theta}_i}(a|s) e^{-\frac{2\eta}{1-\gamma}}\geq \pi_{\tilde{\theta}_i}(a|s) \left(1-2\frac{\eta}{1-\gamma}\right),
\end{align*}
where the last inequality comes from the fact that $e^x \geq x+1$ for all $x.$
We focus now on the Hessian of $\Phi^{\pi_{\theta}}(\mu)$ with respect to $\theta$, that is $\nabla^2 \Phi^{\pi_{\theta}}(\mu).$ It suffices to show that the spectral norm of the matrix $\left(D^{\theta}\right) ^{-\frac{1}{2}}\cdot \nabla^2_{\theta} \Phi^{\pi_{\tilde{\theta}}}(\mu)\cdot \left(D^{\theta}\right) ^{-\frac{1}{2}}$ \footnote{which has the same spectral norm with $\left(D^{\theta}\right) ^{-1}\cdot \nabla^2_{\theta} \Phi^{\pi_{\tilde{\theta}}}(\mu)$} is bounded by $L$ (notice that the Hessian is computed at $\tilde{\theta}$ and the diagonal matrix $D^{\theta}$ is computed at $\theta$ with $\norm{\tilde{\theta}-\theta}_{\infty} \leq \frac{\eta}{1-\gamma}$). 

Fix agent $i$, scalar $t\geq 0$, and a vector $u = \left(D^{\tilde{\theta}}\right)^{-1}v$ with $v$ a unit vector. Moreover, let $V(t) = V^{\pi_{\theta+tu}}_i(\mu).$ 

It holds that $$V(t) = \sum_{s\in S} \mu(s) \sum_{\a \in A} \pi_{\theta+tu}(\a|s)  Q_i^{\pi_{\theta+tu}}(s,\a).$$ Hence taking the second derivative we have
\begin{equation}\label{eq:VQ}
\begin{split}
\frac{d^2 V(0)}{dt^2} = &\sum_{s}\frac{\mu(s)}{d_{\mu}^{\pi_{\tilde{\theta}}}(s)}\sum_{\a \in A} \pi_{\theta}(\a|s) \frac{d^2 Q_i^{\pi_{\theta+tu}}(s,\a)}{dt^2}\Bigr|_{t=0}
\\&+\sum_{s}\frac{\mu(s)}{d_{\mu}^{\pi_{\tilde{\theta}}}(s)}\sum_{\a \in A} \frac{d^2\pi_{\theta+tu}(\a|s)}{dt^2}\Bigr|_{t=0}  Q_i^{\pi_{\theta}}(s,\a)
\\&+ 2\sum_{s\in S}\frac{\mu(s)}{d_{\mu}^{\pi_{\tilde{\theta}}}(s)}\sum_{\a \in A}  \frac{d\pi_{\theta+tu}(\a|s)}{dt}\Bigr|_{t=0}\frac{d Q_i^{\pi_{\theta+tu}}(s,\a)}{dt}\Bigr|_{t=0}.
\end{split}
\end{equation}

The following calculations hold (see \cite{Aga20}, section D).
\begin{enumerate}
    \item Let $a_i$ be the $i$-th coordinate of $\a$ and $u_{i,s,b_i}$ the coordinate of $u$ corresponding to agent $i$ choosing action $b_i$ at state $s.$ We have\\ $\frac{d\pi_{\theta+tu}(\a|s)}{dt}\Bigr|_{t=0} = \pi_{\theta}(\a|s)\sum_{i\in N}\sum_{b_i \in A_i} u_{i,s,b_i} \cdot  (\textbf{1}_{b_i=a_i} - \pi_{\theta_i}(b_i|s)).$ We conclude that \begin{align*}\left|\frac{d\pi_{\theta+tu}(\a|s)}{dt}\Bigr|_{t=0}\right| &\leq \left|\sum_{i\in N}\sum_{b_i\in A_i} \frac{\pi_{\theta}(\a|s)\cdot  (\textbf{1}_{b_i=a_i} - \pi_{\theta_i}(b_i|s))}{\pi_{\tilde{\theta}_i}(b_i|s)}\right| \\&\stackrel{(\ref{eq:init})}{\leq} \sum_{i\in N} \left(1+\frac{4\eta}{1-\gamma}\right)\pi_{\theta_{-i}}(\a_{-i}|s)(1-\pi_{\theta_i}(a_i|s))\\&+\left(1-\frac{2\eta}{1-\gamma}\right)\sum_{i \in N}\sum_{b_i\neq a_i}\pi_{\theta}(\a|s) \leq \left(1+\frac{4\eta}{1-\gamma}\right)n A_{\max}.
    \end{align*}
    \item We also have $\frac{d^2\pi_{\theta+tu}(\a|s)}{dt^2}\Bigr|_{t=0} = \pi_{\theta}(\a|s)\sum_{i,j \in N}\sum_{b_i \in A_i}\sum_{b_j \in A_j}u_{i,s,b_i} u_{j,s,b_j} (\textbf{1}_{b_i=a_i} - \pi_{\theta_i}(b_i|s)) \cdot (\textbf{1}_{b_j=a_j} - \pi_{\theta_j}(b_j|s)) - \pi_{\theta}(\a|s) \sum_{i\in N}\sum_{b_i,c_i\in A_i} u_{i,s,b_i}u_{i,s,c_i}\pi_{\theta_i}(b_i|s)) (\textbf{1}_{b_i=c_i} - \pi_{\theta_i}(c_i|s)).$
    Similarly, We conclude that \begin{align*}\left|\frac{d^2\pi_{\theta+tu}(\a|s)}{dt^2}\Bigr|_{t=0}\right| &\leq \left(1+\frac{4\eta}{1-\gamma}\right)^2 n^2 A^2_{\max}
    \end{align*}
\end{enumerate}

To bound the derivative of the $Q$-function, observe that
$Q_i^{\pi_{\theta+tu}}(s,\a) = e^{\top}_{s,\a}(I - \gamma P(t))^{-1}r$, where $r(s,\a)$ is the expected reward of agent $i$  if agent choose action $\a$ at state $s$ and $P(t)$ is state-action transition matrix  w.r.t the joint distribution of all agents and the environment. 

It was shown in \cite{Aga20,leonardos21} that
\begin{equation}\label{eq:firstQ}
\begin{split}
    \left|\frac{d Q_i^{\pi_{\theta}+tu}(s,\a)}{dt}\right| &= \gamma \left|e^{\top}_{s,a}(I - \gamma P(0))^{-1}\frac{dP(0)}{dt}(I - \gamma P(0))^{-1}r\right|
    \leq \frac{\gamma\sqrt{|A_{i}|}}{(1-\gamma)^2} \leq \frac{\gamma \sqrt{A_{\max}}}{(1-\gamma)^2},
\end{split}
\end{equation}
and also
\begin{equation}\label{eq:secondQ}
\begin{split}
    \left|\frac{d^2 Q_i^{\pi_{\theta}+tu}(s,\a)}{dt^2}\right| &= 2\gamma^2 \left|e^{\top}_{s_0,a}(I - \gamma P(0))^{-1}\frac{dP(0)}{dt}(I - \gamma P(0))^{-1}\frac{dP(0)}{dt}(I - \gamma P(0))^{-1}r\right|
    \\& \leq \frac{2\gamma^2|A_{i}|}{(1-\gamma)^3} \leq \frac{2\gamma A_{\max} }{(1-\gamma)^3},
\end{split}
\end{equation}
Combining the above and assuming $\eta \leq \frac{1-\gamma}{2}$ we get
\[
\left|\frac{d^2 V(0)}{dt^2}\right| = \sum_{s} \mu(s)\left( \frac{6\gamma A_{\max}}{(1-\gamma)^3} + 9n^2 A^2_{\max} + \frac{3nA_{\max}\gamma\sqrt{A_{\max}}}{(1-\gamma)^2}\right) \leq M \frac{27n^2 A^2_{\max}}{(1-\gamma)^3},
\]
where $M$ is the mismatch coefficient. We choose $L = \frac{27n^2 A^2_{\max}M}{(1-\gamma)^3}.$ Since $u$ is arbitrary, the Claim is proved.

\end{proof}

\end{document}